\newcommand*{\expect}[2][]{\ensuremath{\mathbb{E}_{#1} \left[ #2 \right] }} 
\newcommand*{\ptrue}{\ensuremath{\bm{p}}}
\newcommand*{\causalvar}{\theta_\rightarrow}
\newcommand*{\antivar}{\theta_\leftarrow}
\newcommand*{\pmodel}{\ensuremath{\bm{p}_{\causalvar}}}
\newcommand*{\panti}{\ensuremath{\bm{p}_{\antivar}}}
\newcommand*{\ptransfer}{\ptrue^*} 
\newcommand*{\natgauss}{\cN_\text{nat}}
\newcommand*{\chogauss}{\cN_\text{cho}}
\newcommand*{\Dir}{\mathrm{Dir}}
\newcommand*{\uniform}{\vu}
\newcommand*{\lr}{\gamma}
\newcommand{\mm}{m}
\newcommand{\logpartition}{A}
\newcommand{\alpin}{\alpha}
\newcommand{\nn}{n}
\newcommand{\logpartitionb}{B}
\newcommand{\betin}{\beta}
\newcommand*{\smoothness}{B}
\newtheorem{proposition}{Proposition}
\begin{document}

\runningauthor{R\'emi Le Priol, Reza Babanezhad, Yoshua Bengio, Simon Lacoste-Julien}

\twocolumn[
\aistatstitle{An Analysis of the Adaptation Speed of Causal Models}
\aistatsauthor{ 
    R\'emi Le Priol 
    \And  Reza Babanezhad Harikandeh}
\aistatsaddress{ 
    Mila, Universit\'e de Montreal 
    \And  SAIT AI Lab, Montreal}
\aistatsauthor{
    Yoshua Bengio
    \And Simon Lacoste-Julien}
\aistatsaddress{
    Mila, Universit\'e de Montreal\\ Canada CIFAR AI Chair
    \And Mila, Universit\'e de Montreal\\ Canada CIFAR AI Chair \\ SAIT AI Lab, Montreal }
]

\begin{abstract}
Consider a collection of datasets generated by unknown interventions on an unknown structural causal model $G$.
Recently, \citet{bengio2019meta} conjectured that among all candidate models, $G$ is the \emph{fastest to adapt} from one dataset to another, along with promising experiments. 
Indeed, intuitively $G$ has less mechanisms to adapt, but this justification is incomplete.
Our contribution is a more thorough analysis of this hypothesis.
We investigate the adaptation speed of cause-effect SCMs.
Using convergence rates from stochastic optimization, we justify that a relevant proxy for adaptation speed is distance in parameter space after intervention.
Applying this proxy to categorical and normal cause-effect models, we show two results.
When the intervention is on the cause variable, the SCM with the correct causal direction is advantaged by a large factor.
When the intervention is on the effect variable, we characterize the relative adaptation speed. 
Surprisingly, we find situations where the anticausal model is advantaged, falsifying the initial hypothesis.
\end{abstract}

\section{\uppercase{Introduction}}

A learning agent interacting with its environment should be able to answer questions such as ``what will happen to $Y$ if I change $X$''.
Structural Causal Models (SCM) offer a formalism to answer this kind of questions \citep{pearl2009causality,peters2017elements}. 
The simplest SCM is the model $X\rightarrow Y$ where $X$ is the cause and $Y$ the effect.
Modifying $X$ will modify $Y$ but modifying $Y$ will not alter $X$.
In general, SCMs model the distribution of observations with a directed graph where edges represent \textit{independent mechanisms} \citep{janzing2010causal}.

\begin{figure}
    \centering
    \includegraphics[width=.9\columnwidth]{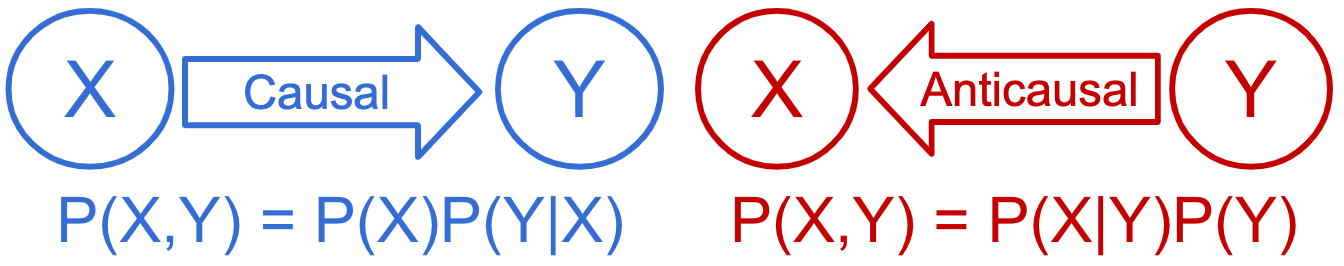}
    \caption{Two models of cause-effect data $X\rightarrow Y$ .}
    \label{fig:cause-effect}
\end{figure}

Modern machine learning methods can fail surprisingly when the test distribution differ from the training distribution \citep{rosenfeld2018elephant}. 
A recent line of work describes these distribution shifts as interventions in an underlying causal model \citep{zhang2013domain, magliacane2018domain}. 
If this description is accurate, then an agent endowed with this hypothetical causal model could handle distribution shifts by updating the few mechanisms affected by the intervention.
On contrary, an agent endowed with an incorrect model, would have to update many mechanisms.
\citet{bengio2019meta} infer that the causal agent will be the fastest to adapt to distribution shifts.
Conversely, they use the speed of adaptation to unknown interventions as a criterion to learn the true causal model, showing promising empirical results on cause-effect models.
Yet they lack a theoretical argument to connect interventions and fast adaptation. Thus we raise the question: 

\textit{Do causal models adapt faster than non-causal models to distribution shifts induced by interventions?}

\paragraph{Contributions.}
We theoretically and empirically answer this question for cause-effect SCMs with categorical variables, and partially  for multivariate normal distributions. 
\begin{itemize}
    \item For both settings, we use stochastic optimization convergence rates to show that the adaptation speed mostly depends on the distance in parameter space between the initialization (before intervention) and the optimum (after intervention).
    \item For categorical variables, we fully characterize this distance. We show that the causal model is faster by a large factor when the intervention is on the cause. 
    \item When the intervention is on the effect, we surprisingly find settings where the anticausal model is systematically faster. As appealing as the fastest-to-adapt hypothesis may sound, it does not hold in every situations.
\end{itemize}

\begin{figure*}
    \centering
    \includegraphics[width=\textwidth]{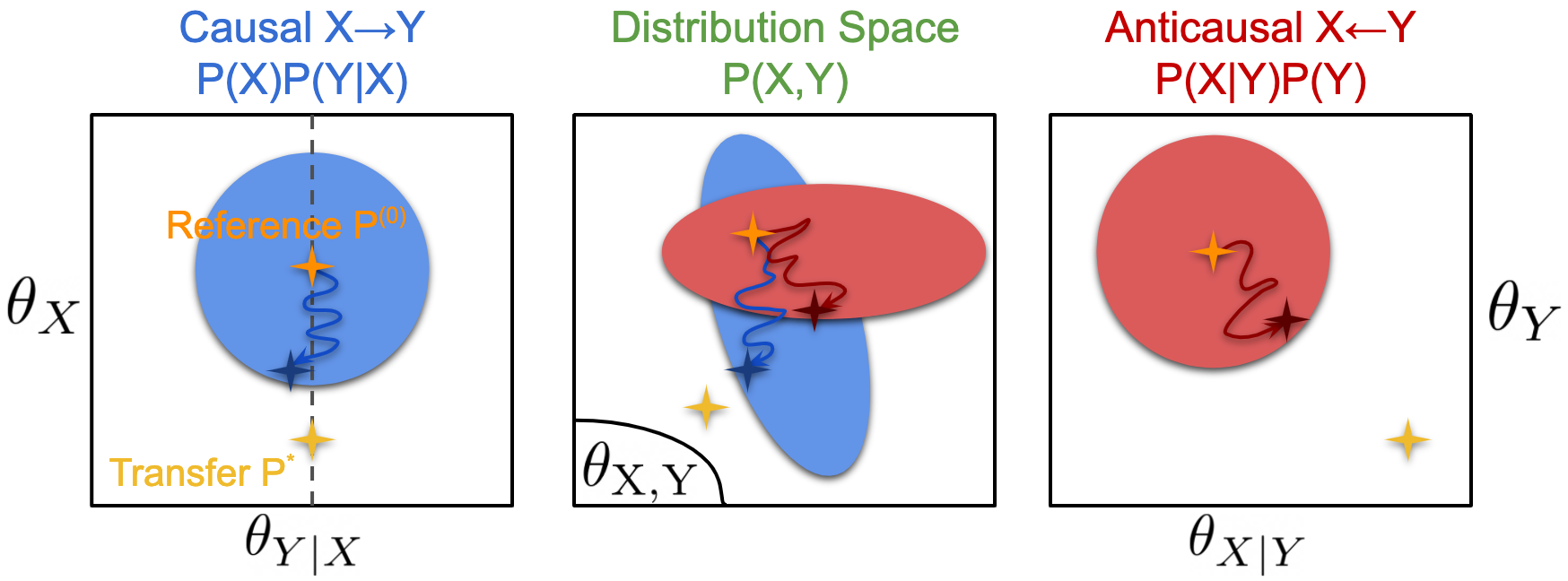}
    \caption{
    \textbf{Intuition behind fast adaptation.}
    An intervention on $X$ turns the reference distribution $\ptrue^{(0)}$ into  a transfer distribution $\ptrue^*$.
    The causal model (blue) only has to adapt $\theta_X$, whereas the anticausal  model (red) has to adapt both its mechanisms. 
    After adaptation, the causal model ends up the closest from the transfer in terms of KL, as visible in the abstract distribution space. 
    Blue and red balls represent
    the \textit{proximity prior} induced by taking a few steps of SGD from the reference in each parameter space.
    Convergence rate analysis reveals that they are spherical functions of the parameter distance,
    but they get mapped to non-trivial shapes in distribution space -- ellipses in this sketch.
    }
    \label{fig:geometry}
\end{figure*}

\section{\uppercase{Related work}}

Causal relationships are asymmetric.
These asymmetries are often visible in observations, so that one can identify which is cause and which is effect under relevant assumptions \citep{mooij2016distinguishing}. 
A common assumption is to constrain the set of functional dependencies between cause and effect. 
By contrast, in our work, we focus on two families of distributions which are notoriously unidentifiable from observational data: categorical and linear normal variables \citep[Ch.4]{peters2017elements}.
With data coming from a generic directed acyclic graph (DAG), we can only hope to discover the Markov equivalence class of this DAG \citep{verma1991equivalence}.
Many methods seek to achieve this goal, whether constraint-based such as the PC algorithm \citep{spirtes2000causation} or score-based methods using greedy search \citep{chickering2002optimal} or more recently continuous optimization \citep{zheng2018dags,lachapelle2019gradient}.
However to discover the exact graph, we need access to interventional data.

Inferring  causal links from interventions or experiments is the foundation of science. 
Inferring causal links from unknown interventions is a much harder and less principled problem.
\citet{tian2001causal} first studied this setting, proposing a constraint based method to infer the interventional equivalence class  from a sequence of interventions.
Then \citet{eaton2007exact} proposed an exact Bayesian approach.
More recently, \citet{squires2019permutation,ke2019learning}  proposed score based algorithms, improving in scalability and alleviating parametric assumptions.
From a machine learning perspective, we are concerned with the predictive power that this structure will give us when faced with new data.

Distribution shifts are a common problem in machine learning, as well as in causal statistics \citep{zhang2013domain, pearl2014external}.
\citet{scholkopf2012causal} first brought up the idea of \emph{invariance} to tackle this problem.
Following up on this idea, \citet{peters2016causal} designed an algorithm able to identify robust causal features from heterogeneous data.
This work has set a fruitful line of research for robust machine learning \citep{heinze2018invariant, heinze2018causal, rothenhausler2019causal, arjovsky2019invariant}.
In a way, fast adaptation is the complementary idea of invariance: if most mechanisms are kept invariant, then only a few have to adapt. 
\citet{scholkopf2019causality} shed light on these approaches and the broader scope of causality research for machine learning.

\section{\uppercase{Background}}

In this section, we review the formalism of \citet{bengio2019meta} on observations, interventions, models and adaptation.

\paragraph{Reference and Transfer Distributions.}
We assume perfect knowledge of a reference distribution $\ptrue$ over the pair $(X,Y)$ sampled from an SCM $X\rightarrow Y$. This distribution is the object of interventions, which results in new \emph{transfer} distributions $\ptransfer$.
If the \textit{intervention is on the cause}, $X$ is sampled from a different marginal, then $Y$ is sampled from the reference conditional
\begin{equation}
    \ptransfer(x,y) = \ptransfer(x) \ptrue(y|x) \; .
\end{equation}
If the \textit{intervention is on the effect}, $X$ is sampled from the reference marginal, then $Y$ is sampled from another marginal independently of $X$
\begin{equation}
    \ptransfer(x,y) = \ptrue(x) \ptransfer(y) \; .
\end{equation}
For each transfer distribution, we observe a few samples.

\paragraph{Models.} 
We parametrize two generative models of~$(X, Y)$ (Fig.~\ref{fig:cause-effect}):
\begin{align}
    \pmodel (x, y) = \ptrue_{\theta_X}(x)  \ptrue_{\theta_{Y|X}}(y|x)  \quad 
    &\textit{-- causal} \label{eq:causal_eq}\\
    \panti (x,y) = \ptrue_{\theta_Y}(y)  \ptrue_{\theta_{X|Y}}(x|y) \quad 
    &\textit{-- anticausal}  \label{eq:anti_causal_eq}\; .
\end{align}
For each model, we call mechanisms the marginal and conditional models.
Each mechanisms has its own set of parameters, e.g. $\theta_X$ and $\theta_{Y|X}$.
In the following we will use $\theta$ to denote interchangeably $\causalvar$ and $\antivar$.

\paragraph{Adaptation.}
Both models are initialized to fit perfectly the reference distribution $\ptrue_{\causalvar^{(0)}} = \ptrue_{\antivar^{(0)}} = \ptrue$.
They observe fresh samples from $\ptransfer$ one by one and update their parameters $\causalvar$ and $\antivar$  to maximize the log-likelihood with a step of stochastic gradient (SGD).
Thanks to the separate parameters, the causal model log-likelihood loss decomposes as
\begin{align}
    &  \cL_\text{causal}(\causalvar)
    = \expect[(X,Y) \sim \ptransfer]{-\log \pmodel(X,Y)}\nonumber
    \\ = & \expect[\ptransfer]{-\log \ptrue_{\theta_X}(X)} + \expect[\ptransfer]{-\log \ptrue_{\theta_{Y|X}}(Y|X)}
    \label{eq:causal_loss}
\end{align}
When $\ptransfer$ comes from an intervention, \citet{bengio2019meta} observe that the causal model is often faster to adapt than the anticausal model.
Intuitively, this is because the causal model has to adapt only the mechanism which was modified by the intervention.
On the other hand, the anticausal model has to adapt both its mechanisms. In Figure~\ref{fig:geometry}, we compare these different scenarios and the concept of adaptation figuratively.
While appealing, \emph{this reasoning is not rigorous}, as sample complexity bounds of SGD typically do not depend on the number of parameters to update \citep[Th. 6.2 \& 6.3]{bubeck2015convex}.
In the next section, we formalize and understand this phenomenon in the light of convergence rates of stochastic optimization methods.

\paragraph{Distribution Families.} 
We study two of the simplest sub-families of the exponential family \citep{wainwright2008graphical}: categorical and linear normal variables.
Their negative log-likelihood is a convex function of their natural parameter.
These families are interesting because the direction is not identifiable from observational data  \citep[Ch.4]{peters2017elements}
-- e.g. $\pmodel$ and $\panti$ can model the same set of distributions -- 
which makes them challenging for causal discovery.

\uppercase{An optimization perspective}
\label{sec:opt_alg}
One way to formalize adaptation speed is to characterize it via the convergence speed of the stochastic optimization procedure. An appealing aspect of stochastic optimization algorithms such as SGD (when only using fresh samples and running it on the true loss we care about) is that they come with convergence rate guarantees on the \emph{population risk} in machine learning, thus giving us direct sample complexity results to obtain a specific generalization error.
The convergence rate is an \emph{upper bound} on the expected suboptimality after a given number of iterations.
While these rates are about worst case performance and might also be loose, fortunately, for convex optimization, they tend to correspond well to actual empirical performance~\citep{nesterov2004Intro}.
We can thus use the convergence bounds as theoretical proxy for the convergence speed.
In our experiments, we also verify empirically that the bounds correlate well with the observed convergence speed.

Here we provide a classical convergence rate on the expected suboptimality with Average Stochastic Gradient Descent (ASGD) under convexity and bounded gradient assumptions. 
We re-derive this rate in Appendix~\ref{apdx:asgd_rate} for completeness.
This rate applies to log-likelihood maximization for categorical random variables (details in~\ref{apdx:categorical_optimization}). 
Since the target distribution is part of the model family, the log-likelihood suboptimality is equal to the KL-divergence  -- e.g. $\cL(\theta) - \cL(\theta^*) = \KL(\ptrue^* || \ptrue_\theta)$.

\paragraph{ASGD.} 
Assume $\forall \theta, x, \|\nabla \log \ptrue_\theta(x) \| \leq \smoothness$.
After $T$ iterations of SGD on~\eqref{eq:causal_loss},
\begin{equation}
    \theta^{(t+1)} = \theta^{(t)} + \gamma \nabla \log \ptrue_{\theta^{(t)}}(X_t,Y_t)
\end{equation}
with learning rate $\lr := \frac{c}{\sqrt{T}}$,
starting from $\theta^{(0)}$,
the average parameter's $\bar \theta^{(T)} = \inv{T} \sum_{t=0}^{T-1} \theta^{(t)}$ suboptimality  is upper bounded by
\begin{align}
    \expect{\KL(\ptrue^*||\ptrue_{\bar \theta^{(T)}})}
    \leq \frac{
    c^{-1} \|\theta^{(0)} - \theta^* \|^2
     + c \smoothness^2}{2 \sqrt{T}}
    \label{eq:sgd_rate}
\end{align}
where the expectation is taken over the sampling of $T-1$ training points $X_t,Y_t$ and $\theta^*$ is the closest solution to $\theta^{(0)}$ in the solution set
$\argmin_\theta \cL(\theta)$. 
For categorical models, $\smoothness=2$ (see~\ref{apdx:categorical_optimization}).
Consequently, for a fixed $T$ and with small enough $c$, the convergence upper bounds for causal and anticausal models  differ mainly by $\delta := \|\theta^{(0)} - \theta^* \|^2$.

The bounded gradient assumption of~\eqref{eq:sgd_rate} does not apply to the log-likelihood of normal variables. 
In Section~\ref{ssec:proximal_gradient}, we provide an algorithm along with a convergence rate~\eqref{eq:mirror_prox_rate} that do apply to this case.
Overall both bounds~\eqref{eq:sgd_rate} and~\eqref{eq:mirror_prox_rate} carry the same message which can be summarized by:
\begin{center}
\textit{The adaptation speed is dominated by \\ the initial distance}
\end{center}
\begin{align}
    \delta_\text{causal} 
    &= \norm{\causalvar^{(0)} - \causalvar^*}^2\\
    \delta_\text{anticausal} 
    &= \norm{\antivar^{(0)} - \antivar^*}^2 \; .
\end{align}

\paragraph{Other optimization methods.}
\citet[Theorem 1]{yang2016unified} provides a unified convergence rate for stochastic heavy ball and Nesterov methods that is similar to~\eqref{eq:sgd_rate}, where the initial distance is the main difference between causal and anticausal models.
Consequently, our theoretical analysis holds for a larger class of algorithms than ASGD. More generally, it applies to any stochastic optimization method whose sample complexity depends on parameter distance.
 
\section{\uppercase{Categorical variables}}
\label{sec:categorical}

In this section, both cause and effect come from categorical distribution. 
We provide theoretical bounds on $\delta_\text{causal}$ and $\delta_\text{anticausal}$. 
We consider different scenarios to generate reference and transfer data and explain the consequences of each scenario.  

\begin{figure}
    \centering
    \includegraphics[width=.8\columnwidth]{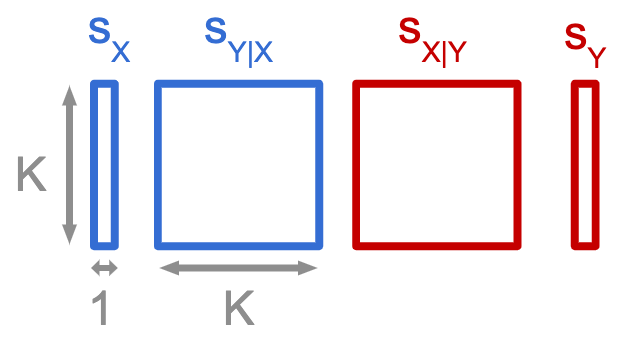}
    \caption{Parametrization of categorical models}
    \label{fig:categorical_models}
\end{figure}

\subsection{Definitions}
Cause $X$ and effect $Y$ are now two categorical variables taking values in $\{1, \dots, K\}$.
Categorical variables are an exponential family with mean parameters $\vp\in\simplex_K$ the probability vector, and with natural parameter $\vs\in\real^K$ -- the logits or score parameters such that $\evp_z = \frac{e^{\evs_z}}{\sum_{z'} e^{\evs_{z'}}}$.
The causal model has parameters $\vs_X := (\evs_x)_{x=1\dots K}$ 
and $\vs_{Y|X} := (\evs_{y|x})_{x,y=1\dots K}$. 
We gather the causal parameters in the variable $\causalvar = (\vs_X, \vs_{Y|X})$ and the anticausal parameters in $\antivar = (\vs_Y, \vs_{X|Y})$ (Fig.~\ref{fig:categorical_models}).
The loss~\eqref{eq:causal_loss} becomes
\begin{align}
    &\cL_\text{causal}(\causalvar) 
    = \expect[(X,Y)\sim\ptransfer]{-\log \pmodel(X, Y)}\\
    &= \expect[\ptransfer]{- \evs_X + \log\sum_{x} e^{s_{x}} 
    - \evs_{Y|X} + \log\sum_y e^{s_{y|X}}}
    \; . \nonumber
\end{align}
Each mechanism's stochastic loss is the sum of a linear function and a softmax function.
The softmax function is convex and 1-Lipschitz, so we can apply rate~\eqref{eq:sgd_rate}. 
To be self-contained, we include details in Appendix~\ref{apdx:categorical_optimization}.

\subsection{Distance after Intervention}
\label{ssec:categorical_intervention_cause}
In this section, we prove that interventions on the cause advantage the causal model by a factor $K$, and we describe when interventions on the effect will advantage one model over another.

\paragraph{Intervention on cause $X$, } $\vs_X \assign \vs^*_X$.
The causal  conditional $\vs_{Y|X}$ is left unchanged,
but the effect marginal $\vs_Y$ is modified in a non-trivial way. 
Consequently the initial distances are 
\begin{align}
    &\delta_\text{causal} 
    = \|\vs_X - \vs^*_X\|^2 
    \\
    &\delta_\text{anticausal} 
    = \|\vs_Y - \vs^*_Y\|^2 
    + \sum_y \|\vs_{X|y} - \vs^*_{X|y}\|^2 \; .
\end{align}
The causal model has to update $K$ parameters, whereas the anticausal model has to adapt $K^2 + K$ parameters. Therefore the causal model seems to be advantaged by a factor $K$. The following proposition -- proved in  Appendix~\ref{apdx:categorical_analysis} -- shows that this is reflected by $\ell_2$ distances.
\begin{proposition}
\label{prop:categorical_cause}
When the intervention happens on the cause,
\begin{equation}
    \delta_\text{anticausal} \geq K \delta_\text{causal} \; .
\end{equation}
\end{proposition}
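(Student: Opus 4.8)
The plan is to boil the inequality down to a single algebraic identity between the \emph{changes in log-probabilities} induced by the intervention, and then let an orthogonal projection do the rest. First I would name the three relevant shifts: since an intervention on the cause replaces $\ptrue(x)$ by $\ptransfer(x)$ but keeps the reference conditional $\ptrue(y|x)$, set $u(x) := \log\ptransfer(x) - \log\ptrue(x)$, $v(y) := \log\ptransfer(y) - \log\ptrue(y)$, and $w(x,y) := \log\ptransfer(x|y) - \log\ptrue(x|y)$. Writing $\ptrue(x|y) = \ptrue(x)\,\ptrue(y|x)/\ptrue(y)$ and the same formula for $\ptransfer$ \emph{with the identical} factor $\ptrue(y|x)$, that factor cancels in the ratio $\ptransfer(x|y)/\ptrue(x|y)$, yielding the key identity
\[
    w(x,y) = u(x) - v(y) .
\]

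Next I would convert the two $\delta$'s into projected norms. Each mechanism's score vector is defined only up to an additive constant, so the minimizer set of each mechanism's cross-entropy term is a line in the direction $\bm{1}$, and the minimizer set of the whole model is a product of such lines; hence the closest solution $\theta^*$ to $\theta^{(0)}$ decomposes mechanism by mechanism, and each block contributes the squared distance from its initial score to the corresponding line. With $P := I - \frac{1}{K}\bm{1}\bm{1}^\top$ the orthogonal projector onto $\bm{1}^\perp$, that squared distance equals $\|P(\vs - \vs^*)\|^2$, which is gauge-invariant and may be evaluated in the log-probability gauge $\vs = \log\vp$ (legitimate since the probabilities sum to one). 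This gives
\[
    \delta_\text{causal} = \|Pu\|^2 , \qquad \delta_\text{anticausal} = \|Pv\|^2 + \sum_{y} \|Pw(\cdot,y)\|^2 .
\]

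The punchline is then immediate: by the identity, $w(\cdot,y) = u - v(y)\bm{1}$, so $Pw(\cdot,y) = Pu$ because $P\bm{1} = 0$; each of the $K$ conditional slices $\vs_{X|y}$ therefore contributes exactly $\|Pu\|^2 = \delta_\text{causal}$, whence $\sum_y \|Pw(\cdot,y)\|^2 = K\,\delta_\text{causal}$ and
\[
    \delta_\text{anticausal} = \|Pv\|^2 + K\,\delta_\text{causal} \;\ge\; K\,\delta_\text{causal} ,
\]
with equality exactly when the intervention leaves the marginal of $Y$ unchanged.

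The step I expect to be most delicate is the second one: one must verify that ``closest solution in $\argmin_\theta\cL(\theta)$'' genuinely factorizes over mechanisms and that each per-block distance really is the gauge-invariant projected norm above -- including the edge case where some cause value has zero probability under $\ptransfer$, so its conditional slice $\vs_{Y|x}$ is unconstrained and contributes nothing. Everything else is one-line algebra with $P$, and the factor $K$ is simply the number of conditional slices $\vs_{X|y}$, each mirroring the single causal term $\vs_X$.
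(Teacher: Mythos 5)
Your proof is correct and is essentially the paper's argument in a different gauge: your identity $w(x,y)=u(x)-v(y)$ is exactly the paper's Lemma~\ref{lem:categorical_reverse} (Bayes rule in score space), the projector $P$ plays the role of the paper's mean-zero normalization, and the conclusion $Pw(\cdot,y)=Pu$ is the paper's $\evs_{x|y}-\evs^*_{x|y}=\evs_x-\evs^*_x$, after which both arguments drop the nonnegative $\|\vs_Y-\vs_Y^*\|^2$ term. The factorization of the solution set over mechanisms and the projected-distance interpretation you flag as delicate are handled in the paper simply by restricting all scores to the mean-zero hyperplane, which is legitimate because the stochastic gradients are orthogonal to $\ones$.
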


\paragraph{Intervention on effect $Y$, } $\forall x, \vs_{Y|x} \assign \vs^*_Y$.
Cause and effect become independent. 
The causal model is advantaged only if the intervention $\vs^*_Y$ is close enough from the previous marginal, as formalized by the following proposition:
\begin{proposition}
\label{prop:categorical_effect}
When the intervention happens on the effect
\begin{align}
    \Delta 
    &:= \delta_\text{causal} - \delta_\text{anticausal} \nonumber \\
    &= (K-1) \left(\norm{\vs^*_Y - \vc}^2 - R^2 \right)
\end{align}
where $R^2\approx K\widehat\Var_X[\log\sum_ye^{\evs_{y|X}}]$ 
and $\vc = \frac{\left(\sum_x \vs_{Y|x}\right) - \vs_Y }{K-1}$.
\end{proposition}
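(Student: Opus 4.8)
To prove Proposition~\ref{prop:categorical_effect} I would write the two initial distances explicitly, subtract them, complete the square in the transferred effect logit $\vs^*_Y$, and read $R^2$ off the remaining constant. First, note that each loss in~\eqref{eq:causal_loss} separates over mechanisms, and (since $\ptransfer$ has full support) each mechanism's loss is, up to a constant, a positive-weighted sum of per-row KL divergences, hence minimised rowwise; so the closest solution to $\theta^{(0)}$ can be computed block by block, the only remaining freedom being the shift invariance $\vs\mapsto\vs+c\,\mathbf 1$ of the logits, which the closest-solution rule resolves by picking the mean-centred representative in each block. After an intervention on $Y$: the causal cause-marginal does not move (it already equals $\ptransfer(x)=\ptrue(x)$); each causal conditional row moves from $\vs_{Y|x}$ to $\vs^*_Y$ (because $\ptransfer(y|x)=\ptransfer(y)$); the anticausal effect marginal moves from $\vs_Y$ to $\vs^*_Y$; and, because $X\perp Y$ after the intervention, each anticausal conditional row moves from $\vs_{X|y}$ to the reference cause marginal $\vs_X$ (because $\ptransfer(x|y)=\ptrue(x)$). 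In the centred gauge this gives
\begin{align}
    \delta_\text{causal} &= \sum_x \|\vs_{Y|x} - \vs^*_Y\|^2 , \label{eq:pf-dcausal}\\
    \delta_\text{anticausal} &= \|\vs_Y - \vs^*_Y\|^2 + \sum_y \|\vs_{X|y} - \vs_X\|^2 . \label{eq:pf-danti}
\end{align}

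Next I would subtract~\eqref{eq:pf-danti} from~\eqref{eq:pf-dcausal} and expand the squares. The quadratic term $\|\vs^*_Y\|^2$ then carries coefficient $K-1$ ($K$ copies from $\delta_\text{causal}$, one from $\delta_\text{anticausal}$), the linear term is $-2\langle \sum_x\vs_{Y|x}-\vs_Y,\ \vs^*_Y\rangle$, and the rest is free of $\vs^*_Y$. Completing the square yields $\Delta = (K-1)\,\|\vs^*_Y-\vc\|^2 + \text{const}$ with $\vc = (\sum_x\vs_{Y|x}-\vs_Y)/(K-1)$, exactly as stated; naming the constant $-(K-1)R^2$ gives
\begin{equation}
    (K-1)R^2 = (K-1)\|\vc\|^2 - \sum_x\|\vs_{Y|x}\|^2 + \|\vs_Y\|^2 + \sum_y\|\vs_{X|y}-\vs_X\|^2 . \label{eq:pf-R2}
\end{equation}

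It then remains to reduce~\eqref{eq:pf-R2} to $K\,\widehat\Var_X[a_X]$ with $a_x := \log\sum_y e^{\evs_{y|x}}$, which I would do with two observations. First, $\log\ptrue(x|y)$ equals $\evs_x + \evs_{y|x} - a_x$ plus a term depending only on $y$, so the centred anticausal-conditional displacement $\vs_{X|y}-\vs_X$ is the $x$-centring of $(\evs_{y|x}-a_x)_x$; summing its squared norm over $y$ and expanding the variance over $x$ of $\evs_{y|x}-a_x$ shows that $\sum_y\|\vs_{X|y}-\vs_X\|^2$ equals a within-$x$ variance of the conditional logits plus a multiple of the variance of the row log-partitions $a_x$. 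Second, the only non-polynomial term in~\eqref{eq:pf-R2} is $\vs_Y$, arising from the log-sum-exp $\log\ptrue(y)=\log\sum_x\ptrue(x)\,\ptrue(y|x)$; linearising it by the Jensen step $\vs_Y\approx \sum_x\ptrue(x)\,\vs_{Y|x}$ turns $-\sum_x\|\vs_{Y|x}\|^2+(K-1)\|\vc\|^2+\|\vs_Y\|^2$ into minus that same within-$x$ variance. The two within-$x$-variance pieces then cancel in~\eqref{eq:pf-R2}, leaving, after the $\frac{K}{K-1}$ normalisation, $K\,\widehat\Var_X[a_X]$, up to the Jensen gap and lower-order corrections --- which is what the $\approx$ in the statement records.

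I expect this last reduction to be the main obstacle: one must carry the full variance/covariance bookkeeping so that every cross term between the conditional logits and the row log-partitions cancels, and one must justify neglecting the Jensen gap for $\log\ptrue(y)$ --- which is second order in the dispersion of the conditional distributions, the same order as $R^2$ itself, so the approximation is self-consistent rather than uniformly negligible. The rest is routine once the logit gauge is handled with care: the full-support hypothesis on $\ptransfer$ is what forces the rowwise optimum in the first step, and the mean-centred gauge is what makes $\vc$ and $R^2$ well defined as written.
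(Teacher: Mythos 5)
Your proposal is correct and follows essentially the same route as the paper's proof: the same identification of the post-intervention optima ($\vs^*_{Y|x}=\vs^*_Y$ and $\vs^*_{X|y}=\vs_X$ in the mean-zero gauge), the same completion of the square in $\vs^*_Y$ yielding the same $\vc$, and the same score-space Bayes identity (the paper's Lemmas~\ref{lem:categorical_reverse} and~\ref{lem:categorical_identity}) to cancel the conditional-variance terms and reduce $R^2$ to $K\widehat\Var_X[\logpartition(X)]$ up to a convexity/Jensen gap. The only difference is the order of the algebra --- the paper performs the bias--variance decomposition around $\vm$ and $\vn$ and cancels the variance terms before completing the square, whereas you complete the square first --- and your exact expression for $(K-1)R^2$ is equivalent to the paper's.
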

See Figure~\ref{fig:effect_sphere} for an illustration and Appendix~\ref{apdx:categorical_analysis_effect} for the exact formula of $R$ and the proof. 
When the intervention $\vs_Y^*$ is close enough to $\vc$, which depends on the reference, the causal model is advantaged. 
If $\vs_Y^*$ is far from $\vc$ or if $R$ is small then the anticausal model is likely to be advantaged.

\begin{figure}
    \centering
    \includegraphics[width=.9\columnwidth]{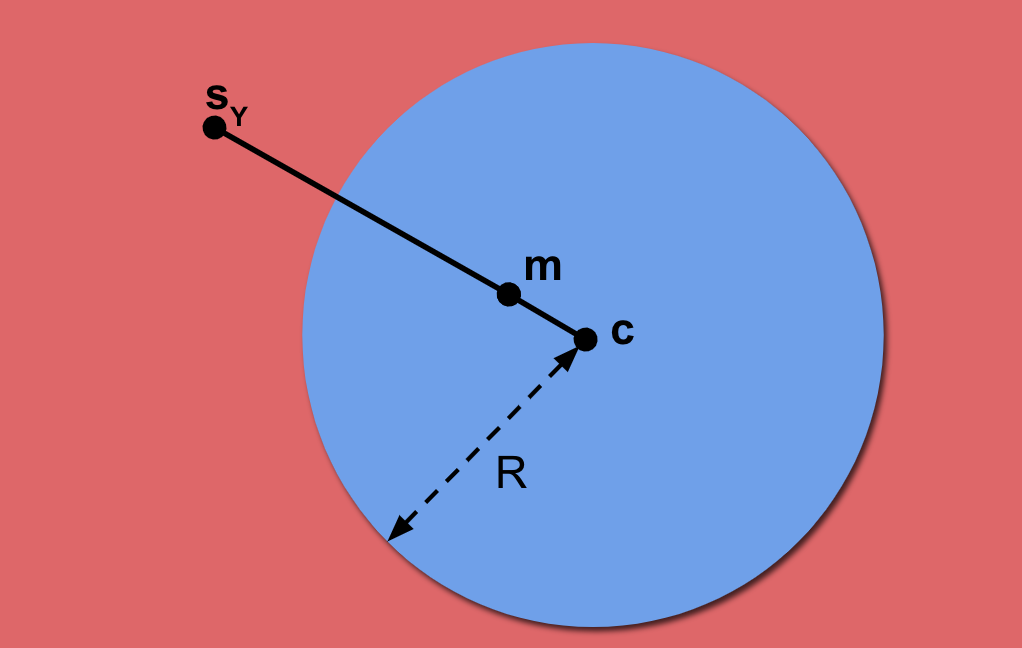}
    \caption{
    \textbf{Illustration of Proposition~\ref{prop:categorical_effect}.}
    $\vc$ is on the line joining $\vs_Y$ and $\vm := \inv{K} \sum_x \vs_{Y|x}$. 
    When $\vs^*_Y$ is within the blue ball of radius $R$ centered at $\vc$, $\Delta \leq 0$ and the causal model is advantaged, otherwise the anticausal model is advantaged (red area). 
    This is a surprising counter-example to the adaptation-speed hypothesis.}
    \label{fig:effect_sphere}
\end{figure}

\newlength{\hcolw}
\setlength{\hcolw}{0.32\textwidth}
\newlength{\scolw}
\setlength{\scolw}{0.1\textwidth}

\begin{figure*}[t]
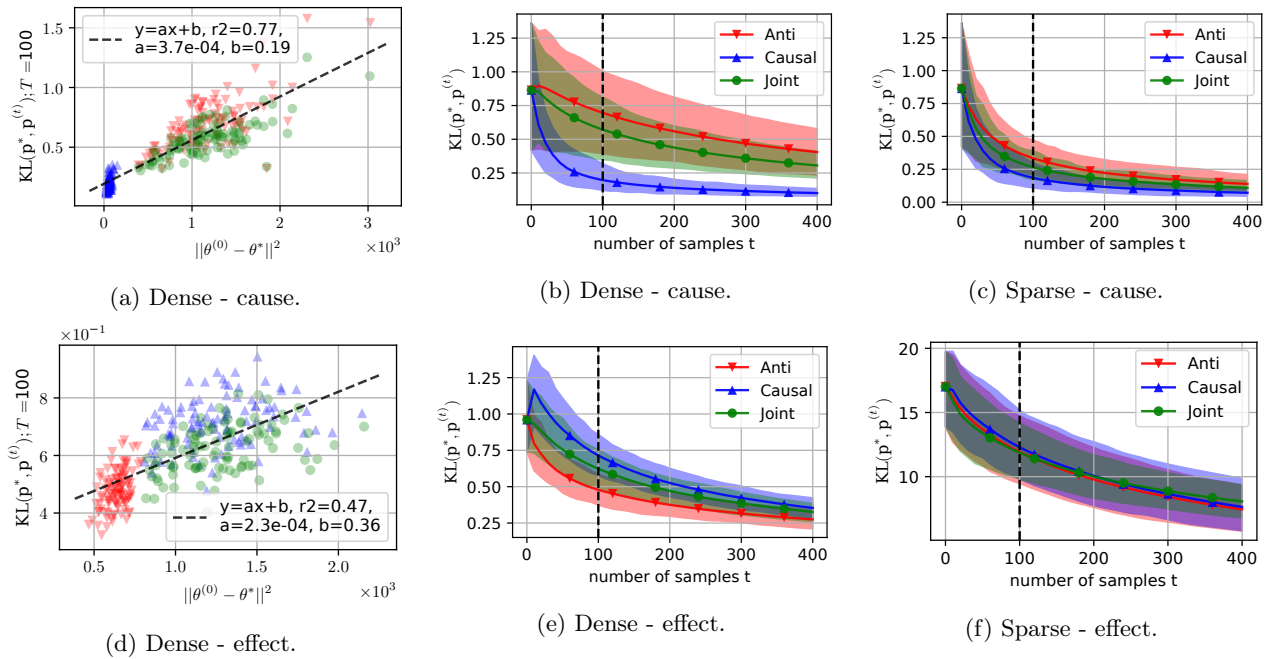

    \centering
    \foreach \intervention in {cause,effect}{
        \foreach \plot in {scatter, curves}{
            \begin{subfigure}{\hcolw}
                \centering
                \includegraphics[width=\textwidth]{img-aistats/\plot_dense_\intervention_k=20.pdf}
                \caption{Dense - \intervention.}
                \label{fig:dense_\intervention_\plot}
            \end{subfigure}
        }
        \begin{subfigure}{\hcolw}
                \centering
                \includegraphics[width=\textwidth]{img-aistats/curves_sparse_\intervention_k=20.pdf}
                \caption{Sparse - \intervention.}
                \label{fig:sparse_\intervention_curves}
        \end{subfigure}
    }
    \caption{
    \textbf{Experimental results on categorical data.}
    Each plot is captioned with the prior and the intervention considered.
    \textbf{Scatter plots} are showing the positive correlation between the KL after 100 steps of SGD and the initial parameter distance. Each point represent one of 100 synthetic pairs $(\ptrue^{(0)},\ptrue^*)$.
    \textbf{Training curves} show the average KL (solid line) and the (5,95) percentiles (shaded) over 100 runs. Remark how all models start from the same initial KL, but they converge at different speeds.}
    
    \label{fig:categorical_results}
\end{figure*}

\subsection{Simulating Reference Distributions}
\label{sec:categorical_initialization}
To evaluate the fast adaptation criterion, we are going to work on synthetic data, which raises the question : from which distribution should we sample $\ptrue = \ptrue_{\theta^{(0)}}$? 
We call this distribution \emph{prior}.
Following the independent mechanism assumption, the marginal on the cause $\vp_X$ and  the conditional of effect given cause $\vp_{Y|X}$ should not contain any information about each other. 
\paragraph{Dense Prior.}
To sample causal mechanisms, a natural choice is
\begin{align}
    \label{eq:dense_prior}
    \vp_X \sim \Dir(\ones_K) 
    \quad \text{and} \quad
    \forall x, \vp_{Y|x} \sim \Dir(\ones_K)
\end{align}
where $\Dir$ is the Dirichlet distribution and $\ones_K$ is the all-one vector of dimension $K$.
$\Dir(\ones_K)$ the uniform law over the simplex $\simplex_K$.
This prior leads to the K2 score from the Bayesian network literature \citep{cooper1991bayesian}. 
We call this choice the \emph{dense prior} by opposition to the sparse prior introduced next.
This is the choice made in \citet{bengio2019meta}, as well as \citet{chalupka2016estimating}. 
The latter work reports that distributions sampled from this prior exhibit some asymmetry between $X$ and $Y$.
In Appendix~\ref{apdx:dense_prior}, we complement their work, explaining how the effect marginal is likely to be closer from the uniform distribution than the cause marginal.
This asymmetry means that \textit{the causal direction  is identifiable from observational data}.  

\paragraph{Sparse Prior.}
To fix this issue, we study an alternative prior that is symmetric and ensures that both cause and effect marginals are sampled from a uniform prior over $\simplex_K$. We sample the causal mechanisms as follows
\begin{align}
    \label{eq:sparse_prior}
    \vp_X \sim \Dir(\ones_K) 
    \quad \text{and} \quad
    \forall x, \vp_{Y|x} \sim \Dir(\ones_K /K) \; .
\end{align}
The $\ones_K /K$ parameter means that samples will be approximately sparse, hence the name.
We show in Appendix~\ref{apdx:sparse_prior} that with this sampling scheme, the joint is sampled from a sparse Dirichlet over $\simplex_{K^2}$: $\vp_{(X, Y)} \sim \Dir(\ones_{K^2} /K)$. 
This in turns means that we can switch the roles of $X$ and $Y$ in~\eqref{eq:sparse_prior}.
The effect marginal has uniform density over the simplex.
In general, \textit{the causal direction is not identifiable from observational data.}
In Bayesian Networks literature, this is known as the Bayesian Dirichlet equivalent uniform prior \citep{heckerman1995learning}.

\subsection{Categorical Variables Experiments}
\label{ssec:categorical_experiments}

\paragraph{Goal.}
As discussed in Section \ref{sec:categorical_initialization}, the prior over the joint distribution on $(X, Y)$ is going to influence the behavior of ASGD. 
We are seeking answers to two questions:
\begin{enumerate}
    \item Is the adaptation speed positively correlated with the initial distance, as suggested by the upper bound \eqref{eq:sgd_rate} on the convergence rate of ASGD?
    \item Is there a clear difference in adaptation speed between causal and anticausal models?
\end{enumerate}

\paragraph{Data.}
We consider categorical variables with $K =20$.
For each initialization method, we sample 100 different reference joint distributions. For each of these distributions, we sample an intervention by sampling a probability vector $\vq$ \textit{uniformly} from $\simplex_K$.
If the intervention is on the cause, we plug $\vq$ instead of $\vp_X$. 
If the intervention is on the effect, we redefine $\vp_{Y|x} = \vq, \forall x$.

\paragraph{Models.}
We are comparing causal and anticausal models adaptation speed. We also report results for a model of the joint 
$\vp_{X,Y} = \mathrm{softargmax}(\vs_{X,Y})$
as a reference model. We expect its results to be in between the performance of the causal and anticausal model as it expresses no prior over the direction. 
We optimize all models with Averaged SGD. In each iteration of SGD we get one fresh sample from the transfer distribution. For each model and each setting, we tune the (constant) learning rate so as to optimize the likelihood after seeing $\frac{K^2}{4}=100$ samples, to explore the few samples regime. 
We present results in Figure~\ref{fig:categorical_results}

\paragraph{Dense prior.}
When the intervention is on the cause, the causal model is much closer from its optimum: in Fig.~\ref{fig:dense_cause_scatter} the blue cluster is on the left of the scatter plots. 
This is well correlated with faster adaptation (Fig.~\ref{fig:dense_cause_curves}). 
On the contrary, \textit{when the intervention is on the effect, the anticausal model starts closer from its optimum} and it converges faster
(Fig.~\ref{fig:dense_effect_scatter}, \ref{fig:dense_effect_curves}).
We can interpret this result in light of Proposition~\ref{prop:categorical_effect}.
In Appendix~\ref{apdx:categorical_analysis_effect}, we explain why the radius $R$ is small under the dense prior.
As a result, $\vs_Y^*$ is mostly sampled outside of the ball of radius $R$, consequently the anticausal model is advantaged. 
Overall, there is a wider gap between models in  Fig.~\ref{fig:dense_cause_curves} than in Fig.~\ref{fig:dense_effect_curves}.
Consequently, if we take a balanced average of a few interventions on the cause and a few interventions on the effect, the causal model remains faster (details in Appendix~\ref{apdx:other_categorical_results}).

\paragraph{Sparse prior.}
When the intervention is on the cause, the causal model has a slight advantage (Fig.~\ref{fig:sparse_cause_curves}). 
When the intervention is on the effect, no model has a set advantage (Fig.~\ref{fig:sparse_effect_curves}), but the sparsity induces much higher KL values, as explained in Appendix~\ref{apdx:cat_sparse_explosion}.
This KL explosion drowns the signal coming from the cause intervention, calling for further algorithmic developments -- such as inferring the intervention, as explored by \citet{ke2019learning}.

\section{\uppercase{Multivariate normal variables}}
In this section, we analyze the case of two multivariate normal variables with a linear relationship.
Cause $X$ and effect $Y$ are sampled from the causal model
\begin{align}
    \label{eq:normal_mean_model}
    & X \sim \cN(\mu_X, \Sigma_X)
   \\ 
    & Y|X \sim \cN(\mA X + \va, \Sigma_{Y|X})
\end{align}
with mean parameters $\mu_X, \va \in \real^K$ and $\Sigma_X, \mA, \Sigma_{Y|X} \in \real^{K\times K}$.
This parametrization is the most intuitive but it is unfortunately not appropriate to get convergence rates.
We are going to introduce another parametrization  
along with an algorithm and a convergence rate (Sec.~\ref{ssec:proximal_gradient}),
before providing empirical results (Sec.~\ref{ssec:normal_experiments}).

\begin{figure*}[t]
    \centering
    \hskip 0.06\textwidth
    \foreach \name in {Natural Distance, Cholesky Distance, Speed vs Cholesky distance, Learning Curves}{
        \begin{minipage}{0.22\textwidth}
        \centering
        \name
        \end{minipage}
    }
    \foreach \intervention in {cause, effect}{
        \begin{minipage}{0.06\textwidth}
        \centering
        \intervention
        \end{minipage}
        \foreach \plottype in {distnat, distcho, scatter, curves}{
            \begin{subfigure}{0.22\textwidth}
            \centering
            \includegraphics[width=\textwidth]{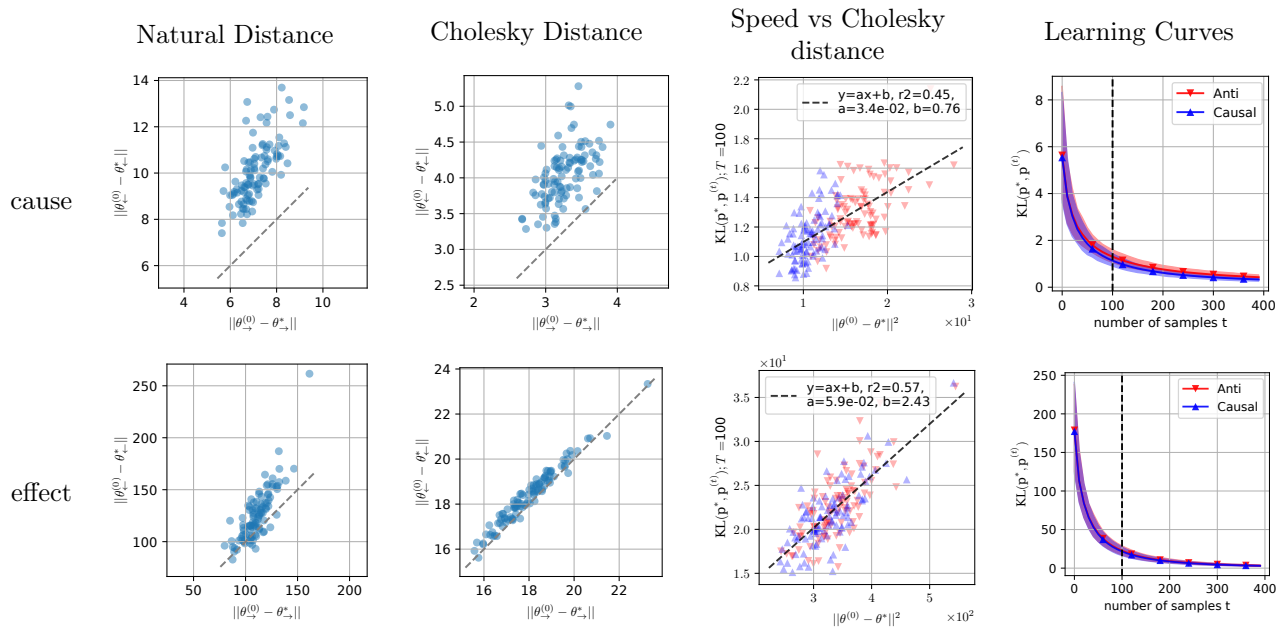}
            \end{subfigure}
        }
    }
    \caption{
        \textbf{Multivariate Normal Variables with dimension $K=10$}.
        Row 1 and 2 correspond to interventions on cause and effect respectively.
        \emph{Column 1 \& 2:} scatter plot $\delta_\text{anticausal}$ vs $\delta_\text{causal}$ respectively in natural and Cholesky parametrization. The grey diagonal is the identity line.
        We observe a natural tendency for $\delta_\text{anticausal} > \delta_\text{causal}$ (points above the grey diagonal), but this is systematically true only for the natural distance when the intervention is on the cause.
        \emph{Column 3 \& 4:} same plot as in Figure~\ref{fig:categorical_results}. Once again we observe a correlation between initial distance and optimization speed. When the intervention is on the cause, the causal model is advantaged. 
        When the intervention is on the effect, both curves overlap.
    }
    \label{fig:normal_results}
\end{figure*}

\subsection{Optimization Analysis}
\label{ssec:proximal_gradient}
The negative log-likelihood of model~\eqref{eq:normal_mean_model} is notoriously non-convex. This is problematic for convergence results. 
For simplicity, we focus in this section on the simple marginal mechanism with mean parameters $\mu, \Sigma$. We detail the full model in Appendix~\ref{apdx:normal_analysis}.
If we use the natural parameters 
$\eta=\Sigma\iinv\mu$ and $\Lambda=\Sigma\iinv$ (precision matrix), the negative log-likelihood is convex
\begin{align}
    \label{eq:natgauss_nll}
    & \expect{- \log \ptrue_{(\eta, \Lambda)}(X)} \\
    &= \half \Big(
    \expect{\Tr (X X^\top \Lambda)  - 2 X^\top\eta}
    + \eta^\top \Lambda^{-1} \eta - \log{\abs{\Lambda}} \Big) \; . \nonumber
\end{align}
This objective is composed of a pleasant stochastic linear term,
and a difficult deterministic barrier objective which goes to infinity when $\Lambda \rightarrow 0$.
This barrier is composed of a matrix inverse and a log determinant.
The assumptions of Lipschitz or gradient-Lipschitz required to get SGD convergence do not hold for the barrier.
While the empirical version of~\eqref{eq:natgauss_nll} has a close formed formula for its global minimum, quite surprisingly, gradient-based optimization of the normal likelihood is difficult to analyze. 
Convex optimization typically deals with non-smooth terms by introducing proximal operators \citep{parikh2014proximal}. 
However this barrier term is too complex to get an analytic formula for the proximal operator.
We transform it into a more convenient form by introducing $\mL$, the lower triangular Cholesky factor of the precision matrix $\Lambda = \mL\mL^T$, and $\zeta = \mL^{-1} \eta = \mL^{\top} \mu$. 
Then~\eqref{eq:natgauss_nll} simplifies into
\begin{align}
    \label{eq:chogauss_nll}
    & \expect{ {-\log \ptrue_{(\zeta, \mL)}(X)} } \\
    & ={\half \expect{{\norm{\mL^\top X - \zeta}^2}} }
    {- \sum_i \log \mL_{i,i}} \; . \nonumber
\end{align}
We will refer to $(\zeta, \mL)$  as \emph{Cholesky parameters}.
This objective is more suitable to gradient based optimization with a simple proximal operator, as detailed in the next section. 
We provide all details about the causal model in Appendix~\ref{apdx:normal_analysis}.

\paragraph{Stochastic Proximal Gradient Algorithm}
We want to minimize the sum of a stochastic convex smooth function $f_X(\theta):=\half \norm{\mL^\top X - \zeta}^2 $ and convex non-smooth regularizer $g(\theta) = - \sum_i \log \mL_{i,i}$.
This is exactly the goal of the stochastic proximal gradient \citep{duchi2010composite} update
\begin{align}
    \label{eq:stochastic_prox_gradient}
    \!\!\! \theta_{t+1} = \argmin_\theta 
    g(\theta) + \inv{2\lr_t}\norm{
    \theta_t - \lr_t\nabla f_{X_t}(\theta_t) - \theta}^2
\end{align}
where~$\lr_t$ is the step-size and $X_t$ is randomly sampled.
For objective~\eqref{eq:chogauss_nll}, the proximal gradient update has a closed form solution that amounts to updating all parameters with the stochastic gradient of the quadratic term, then updating the diagonal elements of $\mL$ with the mapping $x \mapsto \half (x + \sqrt{x^2 + 4\lr})$, thus ensuring that they remain strictly positive (details in Appendix~\ref{apdx:normal_updates}).

\paragraph{Convergence Rate.}
We assume that stochastic gradients are almost-surely $\smoothness$-Lipschitz.
$\smoothness$ is known as the smoothness constant.
We show in Appendix~\ref{apdx:normal_conv_rate} that running the stochastic proximal gradient algorithm
with
step size $\lr_t = \frac{\lr}{3\smoothness \sqrt{T}}$ where $\lr\leq 1$, for $T$ iterations guarantees
\begin{multline}
    \expect{\KL(\ptrue^*||\ptrue_{\bar \theta^{(T)}}) }  \\
    \leq \frac{3\smoothness \|\theta^{(0)} - \theta^*\|^2}{\lr \sqrt{T}}
    + \frac{ \KL(p^*||p_{\theta^{(0)}})}{T} \; .
    \label{eq:mirror_prox_rate}
\end{multline}

\paragraph{Analysis.}
The term $\nicefrac{ KL(p^*||p_{\theta^{(0)}})}{T}$ is equal for causal and anticausal models because we assume $\pmodel^{(0)} = \panti^{(0)}$.  
For normal variables, $\smoothness$ depends only on the data and is a priori equal for both models (Appendix~\ref{apdx:normal_constants}).
Similarly to \eqref{eq:sgd_rate}, both models' rates differ mainly by $\delta=\|\theta^{(0)} - \theta^*\|^2$.

When the intervention is on the cause, we prove in Appendix~\ref{apdx:normal_interventions} that the anticausal model is farther away from its optimum in the natural parametrization
\begin{align}
    \delta_\text{anticausal}^\text{natural} \geq  \delta_\text{causal}^\text{natural} \; . 
\end{align}
Unfortunately, in the Cholesky parametrization (Fig.~\ref{fig:normal_results}, 2nd column), or when the intervention is on the effect (Fig.~\ref{fig:normal_results}, bottom row),we observe empirically that there is no such hard guarantee, although the causal distance tends to be smaller than the anticausal distance.

\subsection{Experiments}
\label{ssec:normal_experiments}

Similarly to categorical variables, we need to decide on a prior over reference and transfer distributions. This choice is informed by two criteria.
First the independent mechanism principle which states that we should sample $\theta_X$ independently of $\theta_{Y|X}$.
Second we want $\theta_Y$ to have approximately the same distribution as $\theta_X$ 
-- e.g. we want the distribution to be approximately symmetric so that we cannot identify the direction from observational data. 
These considerations lead us to a flavor of normal-Wishart prior \citep{geiger2002parameter} described in Appendix~\ref{apdx:normal_experiments}.

We sample $100$ random joint distributions from this prior, and for each distribution we sample a random intervention on the cause, and a random intervention on the effect. We then run the stochastic proximal gradient on objective~\eqref{eq:chogauss_nll}. We report results in Figure~\ref{fig:normal_results}.
Similarly to the categorical case, when the intervention is on the cause, the causal model is advantaged by a slight margin (upper right figure). When the intervention is on the effect both models are learning at the same speed (bottom right figure).

\section*{Conclusion}
We provided a first theoretical analysis of the adaptation speed in two-variables cause-effect SCMs under localized interventions for categorical and normal data.
Convergence guarantees for stochastic optimization on the true population log-likelihood indicates that the adaptation speed is related to the distance between initial point and optimum in parameter space.
We verified this correlation empirically. 
We proved analytically that this distance is lower for the causal model than for the anticausal model when the intervention is on the cause variable.
This explains a surprising phenomenon: while both models start with the same suboptimality, one learns faster than the other.
When the intervention is on the effect variable, we highlighted examples showing that either model can be advantaged.
This observation challenges the intuition that the causal model should be the fastest to adapt, and it raises new questions for the approach of~\citet{bengio2019meta}, such as: are there practical situations where the fastest-to-adapt heuristic is useful ? 
On a more theoretical note, is it possible to characterize the adaptation speed behavior for more general families of distributions? 

\subsubsection*{Acknowledgments}
We thank S\'ebastien Lachapelle and Tristan Deleu for insightful discussions and useful feedback that led to the main results of this paper. 
We thank Waïss Azizian for his help in debugging proofs.
This work was partially supported by the Canada CIFAR AI Chair Program. 
Simon Lacoste-Julien is a CIFAR Associate Fellow and Yoshua Bengio is a Program Co-Director in the Learning in Machines \& Brains program.

\bibliographystyle{plainnat}
\bibliography{references}

\clearpage
\newpage
\appendix

\newcommand{\tet}{\theta_t}
\newcommand{\ttt}{\theta_{t+1}}
\newcommand{\ts}{\theta^{*}}

\onecolumn

\section{SOURCE CODE}
Source code for all experiments is hosted at 
\url{https://github.com/remilepriol/causal-adaptation-speed}.

\section{\uppercase{Categorical optimization}}

In this section we prove a convergence rate of ASGD that applies to the categorical loss, and we show that the constants involved in this rate are the same for both causal and anticausal models.

\subsection{Convergence of ASGD with Fixed Step-Size}
\label{apdx:asgd_rate}
Here we derive a classical convergence rate of Average SGD.
This result is standard ; we include it to be self-contained.
The objective is 
\begin{align}
\min_{\theta} F(\theta) = \expect[i]{f(\theta,i)} \; .
\end{align}

\begin{theorem}
If each $f_i(\theta) = f(\theta,i)$ has bounded gradient $B$,
then after $T$ steps of SGD with step-size $\gamma=\frac{c}{\sqrt{T}}$,
starting from $\theta_0$,
the expected sub-optimality verifies
\begin{align}
   \expect{F(\bar{\theta}_T) - F(\ts)} \leq \frac{1}{2c\sqrt{T}}\|\theta_0- \ts\|^2 + \frac{ c B^2}{2\sqrt{T}}
\end{align}
where $\bar{\theta}_T = \frac{1}{T} \sum_t \tet$.
\end{theorem}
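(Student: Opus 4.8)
The plan is to run the standard ``one-step contraction plus telescoping'' argument for averaged SGD, using convexity of each $f_i$ (assumed throughout the optimization analysis) together with the bounded-gradient hypothesis. Write $g_t := \nabla f(\tet, i_t)$ for the stochastic gradient drawn at step $t$, so that $\ttt = \tet - \gamma g_t$, and, since $i_t$ is sampled independently, $\mathbb{E}[g_t \mid \tet] = \nabla F(\tet)$ while $\|g_t\| \leq B$ almost surely (it is important that the bound is on the \emph{stochastic} gradient, not merely on $\nabla F$).

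First I would expand the squared distance to the fixed optimum $\ts$:
\[
\|\ttt - \ts\|^2 = \|\tet - \ts\|^2 - 2\gamma \langle g_t,\ \tet - \ts \rangle + \gamma^2 \|g_t\|^2 .
\]
Taking the conditional expectation given $\tet$, I would use unbiasedness on the cross term, then convexity in the form $\langle \nabla F(\tet),\ \tet - \ts \rangle \geq F(\tet) - F(\ts)$, and finally $\|g_t\|^2 \leq B^2$ on the last term, to obtain
\[
\mathbb{E}\!\left[\|\ttt - \ts\|^2 \,\middle|\, \tet\right] \leq \|\tet - \ts\|^2 - 2\gamma\bigl(F(\tet) - F(\ts)\bigr) + \gamma^2 B^2 .
\]

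Next I would take the total expectation (tower property), rearrange to isolate $2\gamma\,\mathbb{E}[F(\tet) - F(\ts)]$, and sum over $t = 0,\dots,T-1$. The distance terms telescope, and discarding the nonnegative term $\mathbb{E}\|\theta_T - \ts\|^2$ gives
\[
2\gamma \sum_{t=0}^{T-1} \mathbb{E}[F(\tet) - F(\ts)] \leq \|\theta_0 - \ts\|^2 + T\gamma^2 B^2 .
\]
Dividing by $2\gamma T$ and invoking Jensen's inequality $F(\bar\theta_T) \leq \tfrac{1}{T}\sum_t F(\tet)$ (convexity once more) bounds $\mathbb{E}[F(\bar\theta_T) - F(\ts)]$ by $\tfrac{\|\theta_0 - \ts\|^2}{2\gamma T} + \tfrac{\gamma B^2}{2}$; substituting $\gamma = c/\sqrt{T}$ yields exactly the claimed expression.

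I do not anticipate a genuine obstacle: this is a textbook computation. The only points requiring care are bookkeeping ones — applying the conditioning/tower-property steps correctly, using the gradient bound on $g_t$ rather than on $\nabla F$, and noting that convexity is invoked twice (for the inner-product lower bound and for Jensen on the averaged iterate), which is legitimate here since every mechanism's stochastic loss is a linear-plus-softmax function and hence convex. An alternative route would be the online gradient descent regret bound followed by an online-to-batch conversion, but the direct telescoping argument above is the shortest and is what I would present.
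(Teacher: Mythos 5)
Your proposal is correct and is essentially the same telescoping argument the paper gives: expand $\|\ttt-\ts\|^2$, bound the gradient norm by $B$, use convexity on the cross term, telescope, and finish with Jensen on the averaged iterate. The only cosmetic difference is that you take the conditional expectation before invoking convexity (applying it to $F$), whereas the paper applies convexity to each $f_i$ first and takes the expectation afterward; both orderings are valid and yield the identical bound.
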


\begin{proof}
First we relate the $\ell^2$ distance to optimum at step $t+1$ with the one at step $t$ :
\begin{align*}
    &\| \ttt - \ts \|^2 \\
    &= \| \tet - \ts \|^2 - 2 \gamma \left < f'_i(\tet) , \tet-\ts \right> + \gamma^2\|f_i'(\tet)\|^2 \\
    &\leq \| \tet - \ts \|^2 - 2 \gamma \left < f'_i(\tet) , \tet-\ts \right> + \gamma^2B^2 \; .
\end{align*}
By convexity of $f_i$ and rearranging the terms we get 
\begin{align*}
2\gamma( f_i(\tet) - f_i(\ts) ) & \leq 2 \gamma \left < f'_i(\tet) , \tet-\ts \right> \\
& \leq  \| \tet - \ts \|^2 - \| \ttt - \ts \|^2+ \gamma^2B^2.
\end{align*}
Now we take the expectation, sum up both sides for $T$ iterations and divide by $2T\gamma$ to get
\begin{align*}
    &\frac{1}{T} \sum_{i=1}^T \expect{F(\tet) - F(\ts)} \\
    & \leq \frac{1}{2\gamma T}\left(\expect{\|\theta_0- \ts\|^2}
    - \expect{\|\theta_{T+1}- \ts\|^2}\right)
    + \frac{\gamma B^2}{2}\\
    & \leq \frac{1}{2\gamma T}\|\theta_0- \ts\|^2 + \frac{\gamma B^2}{2}
\end{align*}
Finally, we apply Jensen inequality to $F$ in $\bar{\theta}_T = \frac{1}{T} \sum_t \tet$
to get the final result.
\end{proof}

\subsection{Categorical Loss Properties}
\label{apdx:categorical_optimization}

We are now going to verify that assumptions of the rate \eqref{eq:sgd_rate} apply to the negative log-likelihood loss for the categorical distribution.
This loss is standard and it's properties are well-known, but we review them here to be self-contained.

Each mechanism has the same form of negative log-likelihood, with the same kind of stochastic gradients.
The total loss is a sum over mechanisms, and the total stochastic gradient is a concatenation of each mechanisms stochastic gradient.
To apply rate~\ref{eq:sgd_rate}, we can either apply it separately on each mechanism, either apply to the whole. Both path lead to the same result.
In the end, we simply have to check that this loss is convex, and has bounded gradients for all $z$.
The random functions coming from sampling $Z$ are 
\begin{equation}
    f_z(\vs) = -s_z + \log(\sum_{z'} e^{s_{z'}} )
\end{equation}
This function is the softmax -- or logsumexp -- function minus a  stochastic linear term. 

\paragraph{Convexity}
We are going to show that it is convex but not strongly convex because it becomes flat for large score values.
Its derivative is
\begin{equation}
    \nabla f_z(\vs) = -\ve_z + \vp \; .
\end{equation}
where $\ve_z$ is the $z$-th canonical basis element  and $\vp$ is the output of the softargmax function taken on $\vs_Z$.
The Hessian is the same for every $z$.
\begin{equation}
    \nabla^2 f_z(\vs) = \text{diag}(\vp) - \vp \vp^\top \; .
\end{equation}
We observe that for any vector $\vv$,
\begin{align}
    \label{eq:hessian2variance}
    \vv^\top \nabla^2 f_z(\vs) \vv &= \sum_z \evp_z\evv_z^2 - \left(\sum_z \evp_z \evv_z \right)^2 \nonumber \\
    &= \Var_{Z\sim \vp}[v_Z] \geq 0
\end{align}
which means that the logsumexp is convex. 
When $\evs_0$ tends toward positive infinity and the other components remain constant, $\vp$ tends toward a Dirac on the 0-th component.
Then \eqref{eq:hessian2variance} is 0 for all $\vv$, so the logsumexp is not strongly convex.


\paragraph{Bounded Gradients.}
The gradient norm is
\begin{align}
    \label{eq:cat_grd_uppr_bnd}
    \|\nabla f_z(\vs) \|
    &= \|\vp -\ve_z\| \\ \; .
\end{align}
This norm is maximized for $\vp=\ve_{z'}, \forall z' \neq z$.
The maximum is equal to $\sqrt{2}$.
If there are $d$ independent mechanisms (for $d$ variables in the graph), then the total stochastic gradient which is a concatenation of all gradients has a norm bounded by $B=\sqrt{2d}$. 
In our case of cause-effect models, $d=2$ and and the gradients are bounded by $B=2$, or in other words, all the $f_z$  are 2-Lipschitz.

This bound is the same for causal and anticausal models. It depends on the part of space where $\vp$ is going to live. Assuming that it is going to live in most of the space for both directed models, both loss will have the same Lipschitz constants in practice.

Thanks to these properties, the sample complexity of $\pmodel$ and $\panti$ are bounded by \eqref{eq:sgd_rate}. 
The difference in adaptation speed between causal and anticausal models is characterized by the distance in parameter space.



\section{\uppercase{Categorical analysis}}
\label{apdx:categorical_analysis}

In this section, we prove relationships between parameter distances induced by interventions between the causal and anticausal models. 
First we prove two useful lemmas. 
Then we establish that the causal model dominates the anticausal model by a factor $K$ when the intervention is on the cause. 
Finally we show that no model has a set advantage when the intervention bears on the effect.

The logits or scores $\vs$ live in $\real^K$.
They have one additional degree of freedom compared to the probability $\vp$. 
More specifically, the softargmax is invariant by translations along the vector $\ones = (1,\dots, 1)$.
In other words, all scores $\{ \vs + \lambda \ones | \forall \lambda \in \real \}$ are equivalent. 
Scores which move by following the gradient of this loss will remain in the same affine hyperplane orthogonal to $\ones$.
To ensure that the distances we measure are meaningful, we project all logits in the hyperplane such that $\sum_z \evs_z = 0$, by subtracting their mean.
\begin{definition}[Mean-zero score]
\label{def:mean_zero}
A score vector $\vs$ is mean-zero iff $\sum_z \evs_z = 0$.
\end{definition}

\subsection{Switching Direction}
In this section we are going to prove a few useful results relating cause and anticausal models.
We know the causal parameters $X \rightarrow Y$, and we want to find the corresponding $X \leftarrow Y$ model,
e.g. express $\vs_Y, \vs_{X|Y}$ as a function of $\vs_X, \vs_{Y|X}$ .
This will help us to find a relationship between $\delta_\text{causal}$ and $\delta_\text{anticausal}$.
We first need to define a few useful variables
\begin{definition}[Average conditional score vectors]
\label{def:average_conditional_logit}
For any $x$ or $y$, define
\begin{align}
    \mm(y):= \frac{1}{K} \sum_x \evs_{y|x}, \quad
    \nn(x):= \frac{1}{K} \sum_y \evs_{x|y} \; .
\end{align}
\end{definition}
\begin{definition}[Conditional log-partition function]
    \begin{align}
        \logpartition(x) = \log \sum_y e^{\evs_{y|x}}
    \end{align}
\end{definition}
With these variables, we can express the reverse conditional score from the causal parameters.
\begin{lemma}[Anticausal conditional score]
\label{lem:categorical_reverse}
Let $\evs_{y},\evs_{x}$ be marginal scores, 
and $\evs_{y|x},$ $\evs_{x|y}$ be conditional scores. Then
\begin{equation}
    \label{eq:bayes_scores}
    \boxed{\evs_{x|y} = \evs_x + (\evs_{y|x} - \mm(y)) - (\logpartition(x) - \alpin)} \; ,
\end{equation}
where $\alpin = \frac{1}{K} \sum_x \logpartition(x)$.
\end{lemma}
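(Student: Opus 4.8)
The plan is to apply Bayes' rule at the level of probabilities and then translate it back into the logit parametrization, being careful about the mean-zero normalization (Definition~\ref{def:mean_zero}). Concretely, I would start from the identity $\ptrue(x,y) = \ptrue(x)\ptrue(y|x) = \ptrue(y)\ptrue(x|y)$, which in logits reads
\begin{align}
    \evs_{x|y} - \logpartitionb(y) = \evs_x - \logpartition_X + \evs_{y|x} - \logpartition(x) - \big(\evs_y - \logpartition_Y\big)\nonumber
\end{align}
where $\logpartition_X = \log\sum_x e^{\evs_x}$, $\logpartition_Y = \log\sum_y e^{\evs_y}$ are the marginal log-partition functions and $\logpartitionb(y) = \log\sum_x e^{\evs_{x|y}}$. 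The point is that most of these partition terms are constants (in the relevant variable) that will be absorbed by the mean-zero projection, so the only substantive content is the affine-in-$x$ part $\evs_x + \evs_{y|x} - \logpartition(x)$.

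Next I would impose that $\evs_{x|y}$ is mean-zero in $x$ for each fixed $y$, i.e. $\sum_x \evs_{x|y} = 0$. Averaging the displayed relation over $x$ pins down the additive constant: the $\frac1K\sum_x$ of $\evs_x$ vanishes (marginal scores are mean-zero), the $\frac1K\sum_x$ of $\evs_{y|x}$ is exactly $\mm(y)$ by Definition~\ref{def:average_conditional_logit}, and the $\frac1K\sum_x$ of $\logpartition(x)$ is $\alpin$ by definition. Solving for the constant and substituting back yields
\begin{align}
    \evs_{x|y} = \evs_x + \big(\evs_{y|x} - \mm(y)\big) - \big(\logpartition(x) - \alpin\big),\nonumber
\end{align}
which is precisely~\eqref{eq:bayes_scores}. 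As a sanity check I would verify that the right-hand side indeed sums to zero over $x$ (it does, term by term) and that it is consistent under swapping roles, matching the symmetric formula for $\evs_{y|x}$ in terms of anticausal parameters.

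The main obstacle is purely bookkeeping rather than conceptual: one has to track which log-partition functions depend on which variable and confirm that every term that is \emph{not} of the form ``affine in $x$'' is constant in $x$ and therefore killed by the mean-zero projection (equivalently, shows up only in $\logpartitionb(y)$, which we never need to compute). The cleanest way to avoid sign errors is to work throughout with the probability identity first and only convert to logits at the end, then take the $\frac1K\sum_x$ average once. No inequality or optimization is involved here; this lemma is the change-of-variables backbone that the subsequent propositions (the $\delta_\text{anticausal}\geq K\delta_\text{causal}$ bound and the effect-intervention characterization) will lean on.
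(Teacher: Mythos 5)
Your proposal is correct and follows essentially the same route as the paper's proof: apply Bayes' rule to identify the $x$-dependent part of $\evs_{x|y}$ as $\evs_x + \evs_{y|x} - \logpartition(x)$ up to a constant in $x$, then average over $x$ and use the mean-zero normalization to pin that constant down as $\alpin - \mm(y)$. The only cosmetic difference is that you carry the full joint identity (including $\ptrue(y)$ and the marginal log-partitions) before projecting, whereas the paper works from the proportionality $\ptrue(x|y)\propto \ptrue(y|x)\ptrue(x)$ with an undetermined constant $C(y)$; both reduce to the same computation.
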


\begin{proof}
Let's apply Bayes rule to find the conditional probability mass function
\begin{align*}
    \ptrue(x|y) 
    &\propto \ptrue(y|x) \ptrue(x) \\
    &\propto \exp \left(\evs_{y|x} - \logpartition(x)
    + \evs_x\right)
\end{align*}
where $\logpartition(x)$ is the log-partition function of $\ptrue(y|x)$.
Taking the logarithm, 
\begin{equation}
    \label{eq:un_condx}
    \evs_{x|y} = \evs_x + \evs_{y|x} - \logpartition(x) + C(y)
\end{equation}
where $C(y)$ is a constant defined such that $\sum_x \vs_{x|y} = 0$ (see Definition~\ref{def:mean_zero}).
We take the sum of~\eqref{eq:un_condx} over $x$ to find
\begin{align*}
    & &\sum_x \evs_x& + &\sum_x \evs_{y|x}& - &\sum_x\logpartition(x)& + &K C(y)\\
    &= &0& + &K\mm(y)& + &K\alpin& +  &K C(y)
\end{align*}
which simplifies into
\begin{align*}
    C(y) = - \mm(y) - \alpin
\end{align*}
We plug this in~\eqref{eq:un_condx} to conclude the proof.
\end{proof}

We conclude this section with an identity showing that conditional logits are equally close from their averages in both directions.
\begin{lemma}
\label{lem:categorical_identity}
For any $x$ and $y$ we have 
\begin{equation}
    \label{eq:scores_identity}
    \boxed{\evs_{x|y} - \nn(x) = \evs_{y|x} - \mm(y)} \;.
\end{equation}
where $\nn(x):= \frac{1}{K} \sum_y \evs_{x|y}$ and $\mm(y):= \frac{1}{K} \sum_x \evs_{y|x}$. 
\end{lemma}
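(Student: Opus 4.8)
The plan is to derive identity~\eqref{eq:scores_identity} directly from Lemma~\ref{lem:categorical_reverse}, which already expresses the anticausal conditional $\evs_{x|y}$ in terms of the causal parameters. Starting from
\begin{equation*}
    \evs_{x|y} = \evs_x + (\evs_{y|x} - \mm(y)) - (\logpartition(x) - \alpin),
\end{equation*}
I would rearrange to isolate $\evs_{x|y} - (\evs_{y|x} - \mm(y)) = \evs_x - \logpartition(x) + \alpin$, and observe that the right-hand side depends only on $x$. So it suffices to identify this $x$-dependent quantity with $\nn(x)$.

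First I would compute $\nn(x) := \frac{1}{K}\sum_y \evs_{x|y}$ by averaging Lemma~\ref{lem:categorical_reverse} over $y$. Using $\frac{1}{K}\sum_y \mm(y) = \frac{1}{K^2}\sum_{x,y}\evs_{y|x} = \frac{1}{K}\sum_x \left(\frac{1}{K}\sum_y \evs_{y|x}\right) = 0$, since each $\vs_{Y|x}$ is a mean-zero score (Definition~\ref{def:mean_zero}), and similarly $\frac{1}{K}\sum_y \evs_{y|x} = 0$ inside the term $\frac{1}{K}\sum_y(\evs_{y|x}-\mm(y)) = 0$, the average collapses to $\nn(x) = \evs_x - \logpartition(x) + \alpin$. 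This is exactly the $x$-dependent quantity found above, so substituting back gives $\evs_{x|y} - (\evs_{y|x}-\mm(y)) = \nn(x)$, which rearranges to~\eqref{eq:scores_identity}.

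An alternative, perhaps cleaner, route is to invoke the symmetry of the joint: apply Lemma~\ref{lem:categorical_reverse} in the reverse direction to also write $\evs_{y|x} = \evs_y + (\evs_{x|y} - \nn(x)) - (\logpartitionb(y) - \betin)$ where $\logpartitionb(y) = \log\sum_x e^{\evs_{x|y}}$ and $\betin = \frac{1}{K}\sum_y \logpartitionb(y)$. Adding or subtracting the two relations and using that $\evs_x$ and $\evs_y$ are themselves mean-zero would let the marginal and partition terms cancel against each other; but this requires an extra identity relating $\logpartition(x) - \alpin$ and $\logpartitionb(y) - \betin$, so I expect the first route via direct averaging to be shorter.

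The only real obstacle is bookkeeping: carefully tracking which scores are mean-zero (the conditionals $\vs_{Y|x}$ for each fixed $x$, the conditionals $\vs_{X|y}$ for each fixed $y$, and the marginals $\vs_X, \vs_Y$) so that the averaging steps genuinely kill the terms I claim they kill. There is no deep idea here — Lemma~\ref{lem:categorical_reverse} does the heavy lifting, and Lemma~\ref{lem:categorical_identity} is essentially its corollary after averaging out the $y$-dependence. I would present it as a three-line computation: rearrange Lemma~\ref{lem:categorical_reverse}, average over $y$ to get $\nn(x)$, substitute.
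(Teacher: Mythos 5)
Your proof is correct, but it takes a genuinely shorter route than the paper's. The paper proves the identity by also applying Lemma~\ref{lem:categorical_reverse} with the roles of $X$ and $Y$ swapped, adding the two resulting relations to cancel the conditional scores, and then arguing that a quantity which is simultaneously a function of $x$ alone and of $y$ alone must be constant, equal to its average, which is $0$; only then does it substitute back. You instead never introduce $\logpartitionb(y)$ and $\betin$: you average the single relation of Lemma~\ref{lem:categorical_reverse} over $y$, use the mean-zero conventions ($\frac{1}{K}\sum_y \evs_{y|x}=0$ for each $x$, hence also $\frac{1}{K}\sum_y \mm(y)=0$) to collapse the average to $\nn(x)=\evs_x-\logpartition(x)+\alpin$, and substitute. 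This is exactly the computation the paper later carries out separately in the proof of Proposition~\ref{prop:marginal_and_average_conditional}, so your argument effectively proves that proposition first and derives Lemma~\ref{lem:categorical_identity} as an immediate corollary. What the paper's symmetric route buys is the explicit dual relation $\evs_y-\mm(y)=\logpartitionb(y)-\betin$ as a by-product; what yours buys is economy, since it needs only one application of Lemma~\ref{lem:categorical_reverse} and no separation-of-variables step. Your bookkeeping of which score vectors are mean-zero is the only place the argument could go wrong, and you have it right.
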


\begin{proof}
We apply Lemma~\ref{lem:categorical_reverse} with the roles of $X$ and $Y$ inverted to express $\vs_{y|x}$ as a function of the anti  causal parameters
\begin{align}
\label{eq:bayes_scores_anti}
    \evs_{y|x} &= \evs_y + (\evs_{x|y} - \nn(x)) - (\logpartitionb(y) - \betin) \; ,
\end{align}
where
$\logpartitionb(y) = \log \sum_x e^{\evs_{y|x}}$ 
and $\betin = \frac{1}{K} \sum_y \logpartitionb(y)$.
We can add \eqref{eq:bayes_scores} and \eqref{eq:bayes_scores_anti} 
to get rid of the conditional scores 
\begin{align*}
    \evs_x - \nn(x) - \logpartition(x) + \alpin
    = - (\evs_y - \mm(y) - \logpartitionb(y) + \betin), 
\end{align*}
for all $x,y$.
The left hand side is constant in $y$ whereas the right hand side is constant in $x$. 
Thus both sides are constants with respect to both $x$ and $y$. 
In particular they are equal to their average
\begin{align*}
    \forall x, 
    \evs_x - \nn(x) - \logpartition(x) + \alpin
    &= \inv{K} \sum_{x'} (\evs_{x'} - \nn(x'))\\
    &-\inv{K} \sum_{x'} \logpartition(x') + \alpin \\
    &= 0 - 0 - \alpin + \alpin \\
    &= 0 \; .
\end{align*}
We plug this equality into \eqref{eq:bayes_scores} to prove the lemma.
\end{proof}

\subsection{Intervention on Cause}
\label{apdx:categorical_analysis_cause}
In this section, we analyze the relationship between $\delta_\text{causal}$ and $\delta_\text{anticausal}$ after an intervention on the cause. 

\paragraph{Proposition~\ref{prop:categorical_cause}.}
\begin{itshape}
{If an intervention happens on the cause $X$ then we have}
\begin{equation}
    \boxed{
    \delta_\text{anticausal} \geq K \delta_\text{causal}} \; ,
\end{equation}
where $\delta_\text{causal} = \|\vs_X - \vs^*_X\|^2$, and $\delta_\text{anticausal} = \|\vs_Y - \vs^*_Y\|^2  + \sum_y \|\vs_{X|y} - \vs^*_{X|y}\|^2$
\end{itshape}

\begin{proof}
Given that $\evs^*_{y|x} = \evs_{y|x}, \vm^* = \vm, \logpartition^*=\logpartition$ and $\alpin^*=\alpin$,  Lemma~\ref{lem:categorical_reverse} tells us that the anticausal conditional $\vs^*_{X|Y}$ verifies
\begin{align*}
    \evs^*_{x|y} - \evs^*_x 
    = (\evs_{y|x} - \mm(y)) - (\logpartition(x) - \alpin)
    = \evs_{x|y} - \evs_x \\
    \implies
    \evs_{x|y} - \evs^*_{x|y}  
    = \evs_x  -\evs^*_x  \; .
\end{align*}
The distance between models before and after intervention are
\begin{align*}
    \delta_\text{causal}
    &= \|\vs_X - \vs^*_X\|^2 \\
    \delta_\text{anticausal}
    & = \|\vs_Y - \vs^*_Y\|^2  + \sum_y \|\vs_{X|y} - \vs^*_{X|y}\|^2 \\
    & \geq 0 +  \sum_y \sum_x ( \evs_x - \evs^*_{x})^2 = K \|\vs_X - \vs^*_X\|^2 \;.
\end{align*}
In conclusion,
\begin{equation}
    \delta_\text{anticausal} \geq K \delta_\text{causal} \; .
\end{equation}
\end{proof}

\subsection{Intervention on Effect}
\label{apdx:categorical_analysis_effect}
The following proposition shows that when the intervention is on the effect, the causal model is advantaged only when the new effect marginal $\vs^*_Y$ is close enough from the previous marginal. 

\paragraph{Proposition~\ref{prop:categorical_effect}}
\begin{itshape}
 When an intervention happens on the effect
\begin{align}
    \label{eq:categorical_effect_distdiff}
    \Delta 
    :&= \delta_\text{causal} - \delta_\text{anticausal}\\
    &= (K-1) \left(\norm{\vs^*_Y - \vc}^2 - R^2\right)
\end{align}
where the score vector $\vc$ and the scalar $R$ are defined as
\begin{align}
    \vc =& \frac{K\vm - \vs_Y }{K-1} \\
    (K-1) R^2 =& K  \|\vn - \vs_X \|^2  + (K-1) \norm{\vc}^2 \\
    & + \norm{\vs_Y}^2 - K\norm{\vm}^2 \nonumber
\end{align}
with $\vm$ and $\vn$ as in Definition~\ref{def:average_conditional_logit}.
\end{itshape}

We illustrate the relationship between $\vm$, $\vc$, $\vs_Y$  and $R$ in Figure~\ref{fig:categorical_sketch_effect}. 

\begin{figure}
    \centering
    \begin{subfigure}{0.45\textwidth}
    \centering
    \includegraphics[width=.9\columnwidth]{img-icml/effect_sphere.png}
    \caption{
    $\vm$ is a convex combination of $\vc$ and $\vs_Y$. 
    The blue bubble is the sub-level set 0 of $\Delta$.
    It is a circle of radius $R$ centered at $\vc$.
    Within this circle, $\delta_\text{causal} \leq \delta_\text{anticausal}$  the causal model is advantaged. 
    Outside this circle, $\delta_\text{causal} \geq \delta_\text{anticausal}$ the anticausal model is advantaged.}
    \label{fig:categorical_sketch_effect}
    \end{subfigure}
    \quad \quad
    \begin{subfigure}{0.45\textwidth}
    \centering
    \includegraphics[width=\columnwidth]{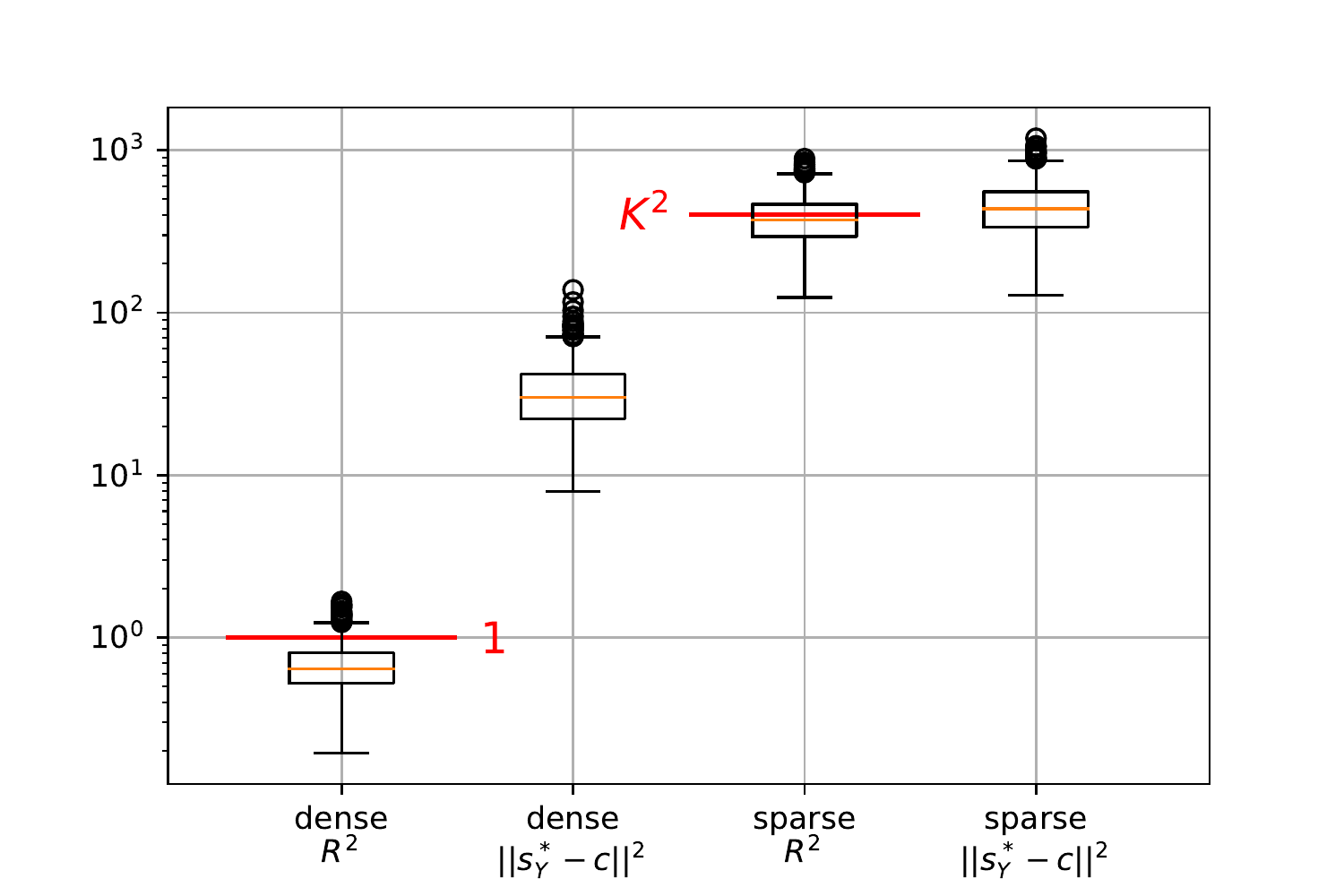}
    \caption{
    Box plots for the radius $R^2$ and deviations $\|\vs_Y^* - \vc\|^2$ for $K=20$ with the dense prior (left) and the sparse prior (right). 
    The y-axis is logarithmic.
    Red lines show analytical estimates for the expected radius.
    }
    \label{fig:radius_boxplot}
    \end{subfigure}
    \caption{Behavior of the hyper-sphere within which the causal model is advantaged, as presented in Proposition~\ref{prop:categorical_effect}.}
    \label{fig:radius_illustrations}
\end{figure}

\begin{proof}
First we expand the causal distance with a bias variance decomposition
\begin{align}
    \delta_\text{causal} 
    &= \sum_x \|\vs_{Y|x} - \vs^*_Y \|^2 \nonumber\\
    &= \sum_{x,y} (\evs_{y|x} - \mm(y) + \mm(y)- \evs^*_y )^2 \nonumber\\
    &= \textcolor{blue}{\sum_{x,y} (\evs_{y|x} - \mm(y))^2 } 
    +K \sum_y ( \mm(y)- \evs^*_y )^2 
    \label{eq:categorical_effect_causal} \nonumber\\
    & \quad + 2\sum_y (\mm(y)- \evs^*_y ) \underbrace{\sum_x (\evs_{y|x} - \mm(y))}_{=0} \; . 
\end{align}
Given that $\vs^*_{X|y} = \vs_{X}$, we can decompose the anticausal distance similarly
\begin{align}
    \delta_\text{anticausal} 
    &= \|\vs_Y - \vs^*_Y \|^2 + \sum_y \| \vs_{X|y} - \vs^*_{X|y}\|^2 \nonumber\\
    &= \|\vs_Y - \vs^*_Y \|^2 + \sum_{x,y}  (\evs_{x|y} - \nn(x) + \nn(x) - \evs_x )^2\nonumber \\
    &= \|\vs_Y - \vs^*_Y \|^2 
    + \textcolor{blue}{\sum_{x,y}  (\evs_{x|y} - \nn(x))^2}\nonumber\\ 
    &+  K \sum_x (\nn(x) - \evs_x )^2 
    \label{eq:categorical_effect_anticausal}\; .
\end{align}
Thanks to Lemma~\ref{lem:categorical_identity}, the variance of conditional score vectors (in blue) in \eqref{eq:categorical_effect_causal} and \eqref{eq:categorical_effect_anticausal} are equal
\begin{align*}
 \textcolor{blue}{\sum_{x,y}  (\evs_{x|y} - \nn(x))^2 
 = \sum_{x,y} (\evs_{y|x} - \mm(y))^2} \; .
\end{align*}
What remains in the difference is the quadratic form
\begin{align*}
    \Delta 
    &= \delta_\text{causal} - \delta_\text{anticausal} \\
    &=  K \|\vm - \vs^*_Y\|^2  
    - K\|\vn - \vs_X \|^2
    - \| \vs_Y - \vs^*_Y \|^2 \; ,
\end{align*}
which we can expand to highlight the role of $\vs^*_Y$ as
\begin{align*}
    \Delta
    =& (K-1) \|\vs^*_Y\|^2
    - 2 \langle \vs^*_Y, K\vm - \vs_Y \rangle \\
    & + K\|\vm\|^2  - \|\vs_Y\|^2 
    - K\|\vn - \vs_X\|^2 \|^2 \\
    =& (K-1) \|\vs^*_Y - \vc\|^2
    - (K-1) \|\vc\|^2 \\
    & + K\|\vm\|^2  - \|\vs_Y\|^2 
    - K\|\vn - \vs_X\|^2 
\end{align*}
where $\vc$ appears as a non-convex interpolation of $\vm$ and $\vs_Y$, 
$\vc = \frac{K\vm - \vs_Y }{K-1}$. 
Define $R^2$ to conclude the proof.
\end{proof}

\paragraph{Empirical estimates of the radius.}
We report values of $R^2$ and $\norm{\vs_Y^* - \vc}^2$ observed for the dense and sparse priors in Figure~\ref{fig:radius_boxplot}.
For dense prior radii are much smaller than  deviations, whereas for the sparse prior they have similar magnitude. 
This explains why the anticausal model systematically adapts faster when the intervention is on the effect and the prior is dense. 
We also observe that radii (and deviations)  are much greater for the sparse prior than for the dense prior.
In the following paragraph we provide some clues  to explain this behaviour.

As illustrated by Figure~\ref{fig:categorical_sketch_effect}, $\vm$ is a convex combination of $\vc$ and $\vs_Y$ : $\vm = \frac{(K-1)\vc + \vs_Y}{K}$ so by convexity of $\|.\|^2$,
\begin{align*}
    &\frac{K-1}{K} \|\vc\|^2 + \inv{K}\|\vs_Y\|^2  \geq \|\vm\|^2 \\
    \implies &(K-1) \|\vc\|^2 + \|\vs_Y\|^2  -K \|\vm\|^2  \geq 0 \\
    \implies &R^2 \geq \|\vn - \vs_X \|^2 \; .
\end{align*}
As $K$ grows larger, this inequality will get closer and closer to an equality. Indeed, $\vm$ will get closer and closer to $\vc$ and we will end up with 
\begin{align*}
    \frac{K-1}{K} \|\vc\|^2 + \inv{K}\|\vs_Y\|^2  - \|\vm\|^2 \ll \|\vn - \vs_X \|^2 \approx R^2 \; .
\end{align*}
Before proceeding, let us prove a simple proposition that is a direct consequence of Lemma~\ref{lem:categorical_reverse}.
\begin{proposition}
\label{prop:marginal_and_average_conditional}
The squared distance between marginal score and reverse average conditional is equal to the empirical variance of the conditional log-partition function
\begin{equation}
    \norm{\vs_X - \vn}^2 = K \widehat\Var_X[\logpartition(X)] \; .
\end{equation}
\end{proposition}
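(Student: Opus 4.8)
The plan is to read off an explicit formula for the vector $\vs_X - \vn$ directly from Lemma~\ref{lem:categorical_reverse}, and then recognize its squared norm as an empirical variance.

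First I would take the identity of Lemma~\ref{lem:categorical_reverse}, namely $\evs_{x|y} = \evs_x + (\evs_{y|x} - \mm(y)) - (\logpartition(x) - \alpin)$, and average both sides over $y \in \{1,\dots,K\}$ (sum over $y$, divide by $K$). By definition the left-hand side averages to $\nn(x)$. On the right-hand side, $\evs_x - \logpartition(x) + \alpin$ does not depend on $y$; the average of $\evs_{y|x}$ over $y$ vanishes because each conditional score vector $\vs_{Y|x}$ is mean-zero (Definition~\ref{def:mean_zero}); and the average of $\mm(y)$ over $y$ vanishes for the same reason after exchanging the order of summation. Hence the averaging collapses to $\nn(x) = \evs_x - \logpartition(x) + \alpin$, i.e.
\begin{equation*}
    \evs_x - \nn(x) = \logpartition(x) - \alpin \; .
\end{equation*}

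Then I would square this scalar identity and sum over $x$. Since $\alpin = \frac{1}{K}\sum_x \logpartition(x)$ is precisely the uniform empirical mean of $x \mapsto \logpartition(x)$, the sum $\sum_x (\logpartition(x) - \alpin)^2$ equals $K\,\widehat\Var_X[\logpartition(X)]$, while the left-hand side is $\sum_x (\evs_x - \nn(x))^2 = \norm{\vs_X - \vn}^2$, which is exactly the claim. As a consistency check, both $\vs_X$ and $\vn$ are mean-zero vectors (each $\vs_{X|y}$ is mean-zero, hence so is its average $\vn$), which matches $\sum_x (\logpartition(x) - \alpin) = 0$. There is essentially no obstacle in this argument; the only point requiring care is tracking which quantities are mean-zero so that the $y$-averages of $\evs_{y|x}$ and of $\mm(y)$ drop out, after which the result is pure bookkeeping.
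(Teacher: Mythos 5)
Your proposal is correct and matches the paper's own proof essentially line for line: average Lemma~\ref{lem:categorical_reverse} over $y$, use the mean-zero convention to kill the $\evs_{y|x}$ and $\mm(y)$ terms, obtain $\evs_x - \nn(x) = \logpartition(x) - \alpin$, then square and sum over $x$. No gaps.
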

\begin{proof}
Taking the average of Lemma~\ref{lem:categorical_reverse} over $y$ yields
\begin{align*}
    \inv{K}\sum_y \evs_{x|y} 
    &= \evs_x - (\logpartition(x) - \alpin) + \inv{K}\sum_y (\evs_{y|x} - \mm(y))  \\
    \nn(x) 
    &= \evs_x - (\logpartition(x) - \alpin)\; ,
\end{align*}
where we used the definition of $\nn(x)$ and the mean-zero scores. 
Reordering terms gives
\begin{equation}
    \evs_x - \nn(x) =  \logpartition(x) - \alpin \;
\end{equation}
Recall that $\alpin$ is the average of $\logpartition(x)$ over $x$. 
Squaring this equation and summing over $x$ concludes the proof.
\end{proof}

Using this proposition we get that 
\begin{align*}
    R^2 &\approx \norm{\vs_X - \vn}^2 \\
    &= K \widehat\Var_X[\logpartition(X)] \\
    & = K \widehat\Var_X[\log\sum_y e^{\evs_{y|X}}]
    \; .
\end{align*}
Conditional scores $\vs_{y|x}$ are taking much greater values with much higher variance under the sparse prior than under the dense prior.
To be clear, we sample independently pseudo-scores $\tilde \evs_{y|x}$ from exp-gamma laws, and we subtract their mean to ensure that they sum to 0 (Definition~\ref{def:mean_zero}). 
$\evs_{y|x} = \tilde \evs_{y|x} - \inv{K} \sum_{y'} \tilde \evs_{y'|x} $.
This means that
\begin{align*}
    \log\sum_y e^{\evs_{y|X}} = \log\sum_y e^{\tilde \evs_{y|X}} 
    - \inv{K} \sum_{y} \tilde \evs_{y|x}
\end{align*}
If we make the approximation that the logsumexp term and the average term are independent then
\begin{align*}
    &\widehat\Var_X[\log\sum_y e^{\evs_{y|X}}] \\
    &\approx \Var_{\evs_{y|x}}[\log\sum_y e^{\evs_{y|x}}] \\
    &\approx \Var_{\tilde\evs_{y|x}}[\log\sum_y e^{\tilde \evs_{y|X}}]
    + \inv{K} \Var_{\tilde\evs_{y|x}}[\sum_{y} \tilde \evs_{y|x}] \\
    & = \psi^{(1)}(K\lambda) +  \inv{K}\psi^{(1)}(\lambda)
\end{align*}
where this last step uses the formula for the variance of an exp-gamma variable twice. 
The variance of an exponential gamma with shape parameter $\lambda$ is $\psi^{(1)}(\lambda)$ where $\psi^{(1)}$ is the trigamma function. 
The log-sum-exp of $K$ independent exp-gamma with scale and shape parameters $(\lambda, \zeta)$ is another exp-gamma with scale and shape parameters $(K \lambda, \zeta)$.
Finally we get the following approximation for the squared radius
\begin{align*}
    R^2 \approx K \psi^{(1)}(K\lambda) + \psi^{(1)}(\lambda) \; .
\end{align*}
The dense prior uses a shape parameter $\lambda=1$ while the sparse prior uses a shape parameter $\lambda=\inv{K}$.
We use two approximations of the trigamma function: $\psi^{(1)}(\inv{K})\approx K^2 + \pi^2/6$  and $\psi^{(1)}(K)\approx \inv{K}$ when $K\geq 10$.
\begin{align*}
    R^2_\text{dense} 
    &\approx K \psi^{(1)}(K) + \psi^{(1)}(1) \\
    &\approx 1 + \pi^2/6  = O(1)\\
    R^2_\text{sparse} 
    &\approx K \psi^{(1)}(1) + \psi^{(1)}(\inv{K}) \\
    &\approx K^2 + K \pi^2 /6 +  \pi^2 /6  = O(K^2) \; .
\end{align*}
In other words for dense prior the radius grows linearly with the dimension $K$. We report these estimates along with real data in Figure~\ref{fig:radius_boxplot}.

\paragraph{Independent Special Case.} 
if $X$ is independent of  $Y$ in the reference distribution -- e.g. $\forall x, y, \ptrue(x, y) = \ptrue(x) \ptrue(y)$ -- then $\forall x, y,$
\begin{align*}
    \evs_x &= \evs_{x|y} = \nn(x) \\
    \evs_y &= \evs_{y|x} = \mm(y)
\end{align*}
Plugging these equalities into Proposition~\ref{prop:categorical_effect} yields
\begin{align*}
    \vc &= \vm = \vs_Y 
    \quad \text{and} \quad
    R = 0 \\ 
    \implies 
    \Delta &=  (K-1) \norm{\vs^*_Y - \vs_Y}^2 
    \geq 0
\end{align*}
which means that the anticausal model is advantaged
$\delta_\text{causal} \geq \delta_\text{anticausal}$. 
This is actually predictable from a simple parameter counting argument. 
When the reference distribution is made of independent distributions, the anticausal conditional mechanism is already optimal $\evs_x = \evs_{x|y}$.
The anticausal model only has to adapt its marginal mechanism $\vs_Y$ of size $K$.
On contrary, the causal model only has to adapt its conditional mechanism $\evs_{y|x} \neq \evs_y$ of size $K^2$.
Overall the causal model has to adapt $K$ times more parameters than the anticausal model.

\subsection{Other Empirical Results for Cause and Effect Interventions}
\label{apdx:other_categorical_results}

In this section, we present additional results for categorical variables.
In Figure~\ref{fig:dense_results}, compared to the main text, we add what happens with the dense prior when we average learning curves (pooled) from 5 interventions on the cause and 5 interventions on the effect : on average the causal model adapts the fastest.
In Figure~\ref{fig:sparse_results}, compared to the main text we show what happens with the sparse prior, both in terms of distance (scatter plots) and in terms of pooled results. Because the intervention on the effect creates huge values of the KL, there is no set advantage for any of the models. 

\begin{figure*}
    \centering
    \foreach \intervention in {cause,effect,pooled}{
        \begin{minipage}{0.3\textwidth}
         \centering
            Dense \intervention
        \end{minipage}
    }    
        \foreach \plot in {curves, scatter}{
         \foreach \intervention in {cause,effect,pooled}{
            \begin{subfigure}{0.3\textwidth}
                \centering
                \includegraphics[width=\textwidth]{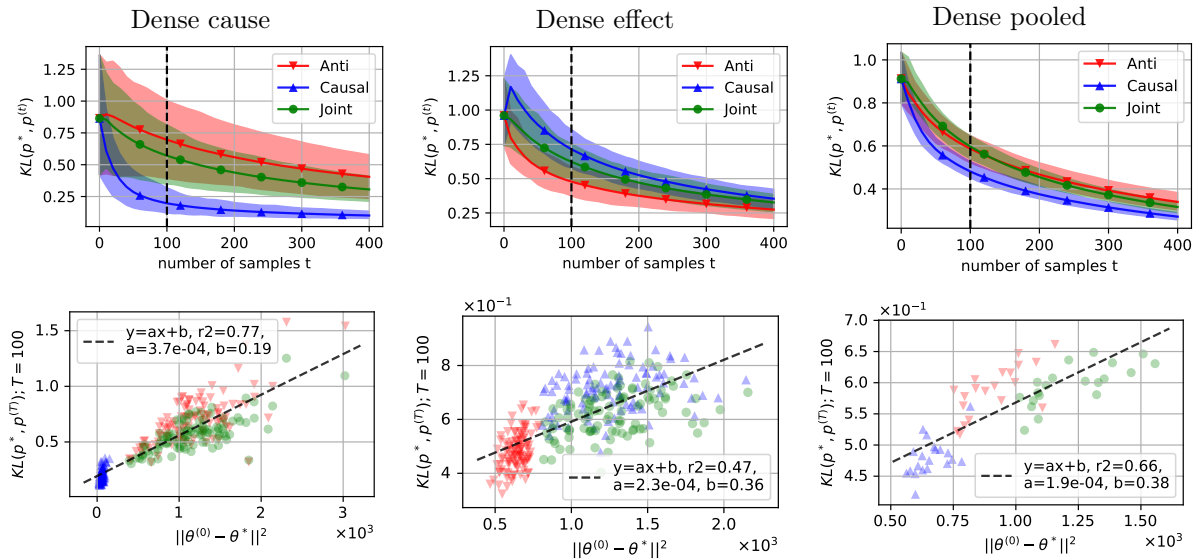}
            \end{subfigure}
        }
    }
    \caption{
        \textbf{Categorical dense prior with K=20.}
        \textit{Row 1:} training curves. Solid lines are average KL over 100 runs. We tune hyper-parameters to minimize the average KL of each model at the black vertical dashed bar ($t$=100). Shaded areas are between (5,95) quantiles. Note that \textit{all models start from the same initial KL, but they converge at different speeds.}
        \textit{Row 2:} scatter plot of the KL at $t$=100 vs. initial distance. Note that the initial distance is well correlated with the KL after 100 steps of SGD.
        \textit{Columns:} we report results for interventions on the cause on column 1, the effect on column 2, and an aggregation of both on column 3. We aggregate results by taking the average of 5 cause interventions and 5 effect interventions as one new trajectory. In total we have 20 such trajectories per model. We are reporting this result because the meta-learning criterion suggested by \citet{bengio2019meta} is akin to the average adaptation speed over a small set of interventions. 
    }
    \label{fig:dense_results}
\end{figure*}

\begin{figure*}
    \centering
    \foreach \intervention in {cause,effect,pooled}{
        \begin{minipage}{0.3\textwidth}
         \centering
            Sparse \intervention
        \end{minipage}
    }
    \foreach \plot in {curves, scatter}{
        \foreach \intervention in {cause,effect,pooled}{
            \begin{subfigure}{0.3\textwidth}
                \centering
                \includegraphics[width=\textwidth]{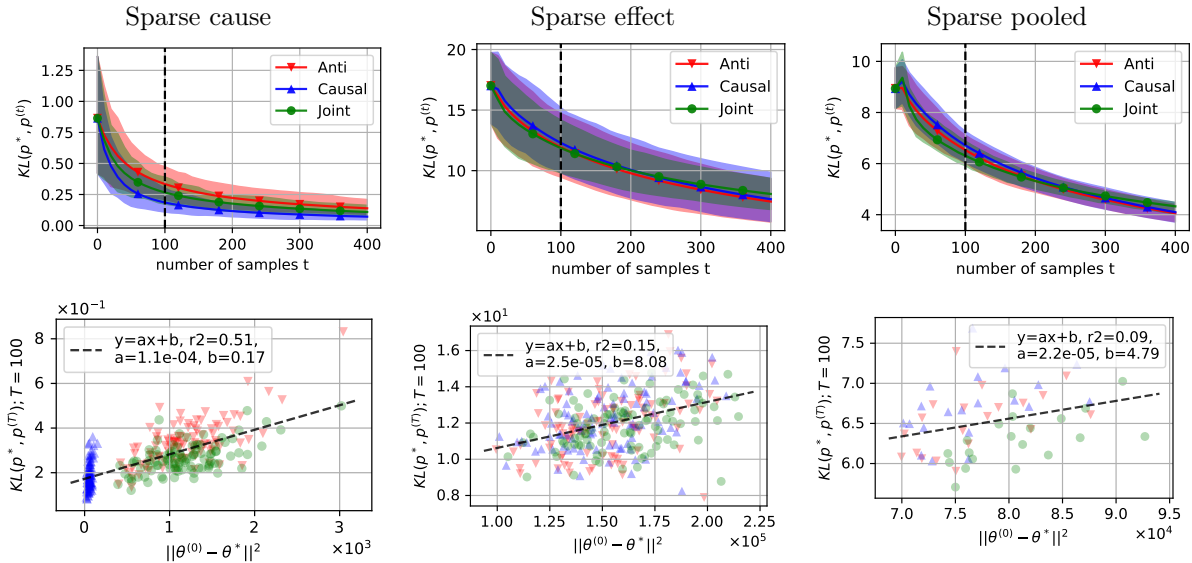}
            \end{subfigure}
        }
    }
    \caption{
        \textbf{Categorical sparse prior with K=20.}
        \emph{Column 1:} intervention on the cause. The causal model starts closer from optimum and adapts slightly faster than others.
        \emph{Column 2:} intervention on the effect. All models have the same initial distance and the same objective value. However the KL value is around 10. This is 10 times larger than when the intervention is on the cause.
        \emph{Column 3:} we take the average of 5 effect and 5 cause interventions. The effect dominates this average because it is much larger. As a result there is no signal.
    }
    \label{fig:sparse_results}
\end{figure*}

\subsection{Single Mechanism Intervention}
If only $\vs_{Y|x_0}$ changes, for some $x_0$, then from Lemma~\ref{lem:categorical_reverse} we get the following equality
\begin{align}
    \label{eq:single_mechanism}
    \delta_\text{anticausal} =
    &\frac{K-1}{K}\delta_\text{causal}
    + \norm{\vs_Y^* - \vs_Y}^2  \nonumber\\
    &+ (K-1)(\logpartition^*(x_0) - \logpartition(x_0))^2 \; .
\end{align}
The causal and anticausal distances seem to be on the same scale, with a multiplicative factor $\frac{K-1}{K} \lessapprox 1$ and a positive additive factor.
This is interesting because the sparsity argument holds: the causal model needs to change $K$ parameters whereas the anticausal model needs to change $K^2+K$ parameters.
That means we could expect an advantage by a factor $K$ for the causal model, similarly to when the intervention is on the cause.
However \eqref{eq:single_mechanism} tells another story: without further assumptions, it seems like both distances will have the same scale.

\begin{figure}
    \centering
    \foreach \init in {dense,sparse}{
        \foreach \plot in {curves, scatter}{
        \init 
            \begin{subfigure}{.4\textwidth}
                \centering
                \includegraphics[width=\textwidth]{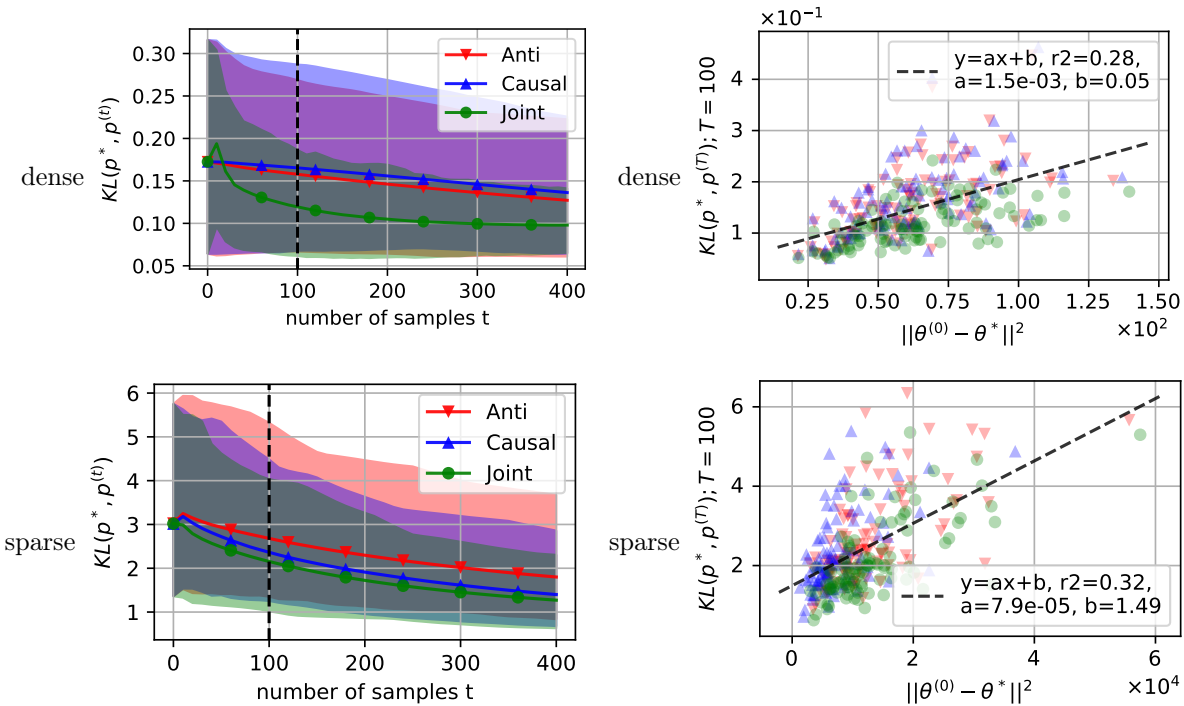}
            \end{subfigure}
        }
    }
    \caption{
        \textbf{Single mechanism intervention with K=20.}
        \emph{Two first rows:} Dense prior. The only model to be slightly advantaged is the joint model. 
        \emph{Two last rows:} Sparse prior. This time there is a slight advantage for the causal model which performs comparably to the joint model.
        Overall the optimization is hard in both settings, since we are observing only $K^2$ samples for models with $O(K^2)$ parameters. The KL barely decreases.
    }
    \label{fig:single_mechanism_intervention}
\end{figure}

\paragraph{Experiments.}
For this kind of intervention to be detectable, we need to intervene on $x_0$ such that $\ptrue(x_0)$ is quite large. 
To ensure this in our experiments, we pick $x_0 = \argmax_x \ptrue(x)$. We report results on dense and sparse priors in Figure~\ref{fig:single_mechanism_intervention}. 
We observe no significant advantage for the causal model, in spite of the parameter counting prediction.

\section{\uppercase{Categorical priors}}
In this section we study the dense and sparse prior described in the main paper.

\subsection{Causal Direction is Identifiable under the  Dense Prior}
\label{apdx:dense_prior}

\citet{chalupka2016estimating} study the prediction of causal direction from observational data under the dense prior assumption. 
The causal direction $X \rightarrow Y$ induces a certain prior over joint distributions $\pi(\vp | \rightarrow)$.  
The anticausal direction $X \leftarrow Y$ induces another one $\pi(\vp | \leftarrow)$.
The Bayes classifier is predicting $\rightarrow$ if 
\begin{align}
    \label{eq:bayes_classifier}
    \log\pi( \rightarrow | \vp) -  \log\pi( \leftarrow | \vp) > 0 \; ,
\end{align}
and $\leftarrow$ otherwise.
Under the dense prior assumption, this classifier makes an error of approximately $0.4$ for $K=2$, which decreases exponentially to $0.001$ for $K=10$.
We reproduced their setting and report the error of the optimal classifier in Table \ref{tab:chalupka_error} for varying K.
In other words the dense prior induce very asymmetric distributions which makes the causal direction identifiable. 

Is this Bayes classifier easy to estimate ? 
It turns out that under the dense prior, the criterion~\eqref{eq:bayes_classifier} can be simplified into the following criterion   
\begin{align}
    \label{eq:chalupka_criterion}
    \KL(\uniform || \ptrue_X) - \KL(\uniform || \ptrue_Y) > 0
\end{align}
where $\uniform := \ones / K$ is the uniform probability vector.
The proof is left as an exercize to the reader. 
If \eqref{eq:chalupka_criterion} is positive, Bayes predicts that the cause is $X$, otherwise $Y$ is the cause.
In words, whichever variable has the most uniform marginal is the effect.
This simple rule is optimal given the prior assumption (and if both directions are equally likely).
We can understand it from a concentration of measure perspective.
The effect marginal is written as a sum of quasi independent uniform variables
\begin{equation}
    \ptrue(y) = \sum_x \ptrue(y|x) \ptrue(x)
\end{equation}
which ends up close from the uniform vector.

\begin{table*}[]
    \small
    \centering
    \begin{tabular}{c|cccccccccccccc}
        K & 2 & 3 & 4 & 5 & 6 & 7 & 8 & 9 & 10 & 11 & 12 & 13 & 14 \\
        \toprule
        error & .4 & .2 & .1 & .07 & .03 & .01 & .005 & .002 & .0007 & .0003 & 7e-5 & 3e-5 & 4e-6 
    \end{tabular}
    \caption{Estimation of the Bayes error under the dense prior assumption for increasing categorical variables dimension K. 
    We estimated these numbers by sampling one million joint distributions for each K. We report 1 significant figure.}
    \label{tab:chalupka_error}
\end{table*}

\subsection{Joint Distribution with Sparse Prior}
\label{apdx:sparse_prior}

The following theorem shows how a Dirichlet prior over joint distributions $\evc_{x,y}= \ptrue(x,y)$ is equal to independent Dirichlet priors over marginal $\eva_x=\ptrue(x)$ and conditional $\evb_{y|x}=\ptrue(y|x)$ probability mass functions. 
By applying this theorem, we find that the sparse prior is equivalent to $\Dir(\inv{K}\ones_{K^2})$.

\begin{theorem}[Dirichlet and Factorization]
    \label{thm:dir_fact}
    Let $\vc$ be a random square matrix of dimension $K$. 
    Let's define $\va$ as the random vector obtained by summing columns of $\vc$, and $\vb$ as a copy of $\vc$ with rows normalized so that they sum to $1$. 
    $\forall (i,j) \in \{1, \dots,K\}^2$
    \begin{align}
        \eva_i  &= \sum_j \evc_{i,j} \\
        \evb_{j|i} &= \frac{\evc_{i,j}}{\eva_i} \; .
    \end{align}
    Let $\gamma$ be a positive square matrix of parameters.
    The following equivalence holds
    \begin{align}
    \label{eq:dirichlet_equivalence}
        \vc \sim \Dir(\gamma) \iff
        \begin{cases}
            &\va \sim \Dir(\sum_i \gamma_i) \\
            &\vb_{:|i} \sim \Dir(\gamma_{i,:}), \forall i \\
            &\va \indep \vb_{:|i} \indep \vb_{:|i'}, \forall i\neq i'
        \end{cases}
        \; .
    \end{align}
\end{theorem}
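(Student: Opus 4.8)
## Proof proposal for Theorem~\ref{thm:dir_fact}

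The plan is to compute the change of variables explicitly and show the Dirichlet density factorizes. Write $\vc = (\evc_{i,j})$ ranging over the simplex $\simplex_{K^2}$, and consider the map $\Phi : \vc \mapsto (\va, \vb)$ where $\eva_i = \sum_j \evc_{i,j}$ and $\evb_{j|i} = \evc_{i,j}/\eva_i$. First I would check that $\Phi$ is a bijection from the interior of $\simplex_{K^2}$ onto the interior of $\simplex_K \times \prod_i \simplex_K$, with inverse $\evc_{i,j} = \eva_i \evb_{j|i}$; this is immediate since $\sum_j \evb_{j|i} = 1$ recovers $\sum_j \evc_{i,j} = \eva_i$ and $\sum_i \eva_i = \sum_{i,j}\evc_{i,j} = 1$.

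Next I would parametrize each simplex by dropping one coordinate to get genuine densities on open subsets of Euclidean space, and compute the Jacobian determinant of $\Phi^{-1}$. The key structural fact is that the map is ``block-triangular'': $\evc_{i,j}$ depends only on $\eva_i$ and the $i$-th block $\vb_{:|i}$, so the Jacobian matrix is block lower-triangular (after ordering variables appropriately), and its determinant is a product over the $\va$-block and the $K$ conditional blocks. A short computation should give that the Jacobian contributes a factor $\prod_i \eva_i^{K-1}$ (each conditional block $\vb_{:|i} \mapsto (\evc_{i,1},\dots,\evc_{i,K-1})$ scales volume by $\eva_i^{K-1}$, and the $\va$-block contributes $1$ after accounting for the dependence).

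Then I would just substitute into the Dirichlet density. Starting from $f_{\vc}(\vc) \propto \prod_{i,j} \evc_{i,j}^{\gamma_{i,j}-1}$, replace $\evc_{i,j} = \eva_i\evb_{j|i}$ to get $\prod_{i,j}(\eva_i\evb_{j|i})^{\gamma_{i,j}-1} = \prod_i \eva_i^{\sum_j\gamma_{i,j}-K} \prod_{i,j}\evb_{j|i}^{\gamma_{i,j}-1}$, multiply by the Jacobian factor $\prod_i\eva_i^{K-1}$ to obtain $\prod_i \eva_i^{(\sum_j\gamma_{i,j})-1}\prod_{i,j}\evb_{j|i}^{\gamma_{i,j}-1}$. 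This last expression is exactly the product of a $\Dir(\sum_j\gamma_{1,j},\dots,\sum_j\gamma_{K,j})$ density in $\va$ and independent $\Dir(\gamma_{i,:})$ densities in each $\vb_{:|i}$, so the joint density factorizes over $\va, \vb_{:|1}, \dots, \vb_{:|K}$, which gives both the marginal laws and the mutual independence. (Since the normalizing constants must also agree on both sides, the proportionality constants match automatically; one can also verify this directly via the Beta-function multiplication formula.) The converse direction follows by reversing the same change of variables, or simply by noting the computation is an ``if and only if'' at the level of densities.

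The main obstacle I expect is bookkeeping the Jacobian carefully: one must fix consistent coordinate charts on each simplex (which coordinate is dropped), verify the block-triangular structure under that choice, and get the exponent $K-1$ exactly right — an off-by-one here would break the identification of parameters. Everything else is routine substitution.
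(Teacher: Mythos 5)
Your proposal is correct in outline, but it takes a genuinely different route from the paper. The paper proves the theorem via the Gamma representation of the Dirichlet: it writes $\vc = \tilde\vc/S$ with independent $\tilde\evc_{i,j}\sim\Gamma(\gamma_{i,j},1)$ and $S=\sum_{i,j}\tilde\evc_{i,j}$, observes that the row sums $\tilde\eva_i=\sum_j\tilde\evc_{i,j}$ are again independent Gammas with shapes $\sum_j\gamma_{i,j}$, and reads off the marginal law of $\va$, the laws of the $\vb_{:|i}=\tilde\vc_{i,:}/\tilde\eva_i$, and all the independences directly from the standard fact that a vector of independent Gammas normalized by its sum is Dirichlet and independent of that sum. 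This sidesteps entirely the coordinate-chart and Jacobian bookkeeping that you correctly identify as the delicate part of your density argument. Your change-of-variables computation does work: the Jacobian factor is indeed $\prod_i \eva_i^{K-1}$, and a quick sanity check is that the Beta-function normalizers multiply out exactly, $B(\gamma)=B(\alpha)\prod_i B(\gamma_{i,:})$ with $\alpha_i=\sum_j\gamma_{i,j}$, so no stray constant survives. What your approach buys is self-containedness (no appeal to Gamma--Dirichlet facts) and an explicit factorized density from which independence is immediate; what the paper's approach buys is the complete avoidance of Jacobians and chart choices, at the cost of invoking the independence of $\tilde\vc/S$ from $S$. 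Both arguments dispatch the converse identically, by noting that the map $\vc\mapsto(\va,\vb)$ is a bijection and each side of the equivalence fully determines the joint law.
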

\begin{proof}
First let's remark that the right side of~\eqref{eq:dirichlet_equivalence} is entirely characterizing the joint distribution on $(\va,\vb)$, and that the relationship between $\vc$ and $(\va,\vb)$ is a bijection with reverse $\evc_{i,j} = \evb_{j|i} \eva_{i}$.
This means that the equivalence~\eqref{eq:dirichlet_equivalence} is an equality between distributions.
This means that we can prove the forward implication and the converse will hold automatically.

If $\vc \sim \Dir(\gamma)$, then there exist $K^2$ independent Gamma variables $\tilde \evc_{i,j} \sim \Gamma(\gamma_{i,j}, 1), \forall i,j$ such that
\begin{align}
    \vc = \frac{\tilde \vc}{S} \quad \text{where} \quad S = \sum_{i,j} \tilde \evc_{i,j} \; .
\end{align}
We know from properties of the Gamma distribution that $\vc$ is independent of $S$.
Now let's define $\tilde \eva_i := \sum_j \tilde \evc_{i,j}$.  This definition has three consequences. 
First $\tilde \eva_i$ is a sum of independent gammas, so it is a gamma with parameters $(\alpha_i := \sum_j \gamma_{i,j}, 1)$. 
Second $S= \sum_i \tilde \eva_i$.
Third $\va = \frac{\tilde \va}{S}$ is independent of $S$ and is a Dirichlet with parameter vector $\bm \alpha = \sum_j \bm \gamma_{:,j}$.

That was for the marginal. Now for the conditional, 
\begin{align}
   \evb_{j|i} = \frac{\evc_{i,j}}{\eva_i} 
   = \frac{\tilde \evc_{i,j} /S}{\tilde \eva_i /S} 
   = \frac{\tilde \evc_{i,j}}{ \tilde \eva_i}  \; .
\end{align}
Again from properties of the Gamma distribution, $\vb_{:|i} \indep \tilde \eva_i, \forall i$, and $\vb_{:|i} \sim \Dir(\gamma_{i,:})$. Each of the conditional $\vb_{:|i}$ is defined with independent gammas, so we also have the independence between conditionals.
We verified all properties of the right side of~\eqref{eq:dirichlet_equivalence}, which concludes the proof.
\end{proof}

\subsection{Categorical Sparse Prior Explosion}
\label{apdx:cat_sparse_explosion}

In this section we explain why the KL takes large values with sparse prior and effect intervention. 

On one hand the sparse prior samples probability vectors which are close from being Dirac. 
On the other hand the effect intervention creates an outer product between two samples drawn uniformly from the simplex $\Dir(\ones)$. 

For instance, for the uniform probability vector $\uniform = \frac{1}{K^2} \ones \in \simplex_{K^2}$ and an almost Dirac $\vp = (1-\eps)\ve_1 + \eps \uniform$
\begin{align*}
    \KL(\uniform || \vp)
    \in \Theta(\log(\inv{\eps}))
\end{align*}
where $\eps$ is a small value.
As we increase $K$, the sparse prior $\Dir(\ones_{K^2}/ K)$ becomes more sparse.
Conceptually, the value of $\eps$ decreases, and the value of $\KL(\uniform || \vp)$ explodes.
This is why we observe high KL values for sparse prior and effect intervention. Empirically, these values also increase with $K$.

\section{\uppercase{Normal optimization}}
\label{apdx:normal_optimization}
In this Section we adapt the stochastic composite mirror-prox algorithm to our setting of unbounded multivariate normal optimization. 
First we describe the algorithm and prove a novel convergence rate that applies to our setting. 
Then we explicit the update formulas for the normal log-likelihood loss with Cholesky parameters. 
Finally we prove that worst case constants appearing in the rate are equal for both causal and anticausal models.

\subsection{Stochastic Composite Mirror-Prox}
\label{apdx:normal_conv_rate}

We want to minimize the composite objective 
\[
F(\theta) = \expect[i]{ f(\theta,i)} + g(\theta)\; .
\]
For simplicity we denote $f(\theta,i)$ by $f_i(\theta)$ and $f(\theta) = \expect[i]{f_i(\theta)}$. 
We assume that $f_i$ is convex, 
$\nabla f_i$ is  $L$-Lipschitz
and $g$ is a convex function. 
The stochastic mirror-prox algorithm update rule at time $t$ is 
\begin{align}
    &\nu_{t} = \theta_{t+1} - \gamma_t f'_i (\theta_t)\\
    & \theta_{t+1} = \argmin_{\theta}\left\{g(\theta)+\frac{1}{\gamma_t}\cB_{h}(\theta,\nu_t)\right\}
\end{align}
where $f_i$ is sampled randomly. $B_h(x,y)=h(x)-h(y)-<h'(y),x-y>$ denotes the Bregman divergence between $x$ and $y$ induced by the convex function $h$ and we have $\|B_h(x,y)\| \geq \frac{\alpha}{2} \|x-y\|^2$. When we set $h(x)= 1/2\|x\|^2$, we recover something called the proximal stochastic gradient method \citep{duchi2009efficient}, also known as Perturbed proximal gradient algorithm \citep{atchade2017perturbed}. 
This last citation in particular has hypothesis very close to ours. 

\subsubsection{Convergence Rate}
The following Theorem is a mild modification of the Theorem 8 in~\citep{duchi2010composite}.
Our result is different in 2 ways. 
First, we remove the boundedness assumption for the Bregman divergence throughout the trajectory i.e. $\cB_{h}(\theta^*,\theta_t) \leq D$ for all $t$.
Second, we replace the $f_i$ $\smoothness$-Lipschitz continuous assumption by $\nabla f_i$ $\smoothness$-Lipschitz continuous. 
We need this last modification for the result to hold on $f_i$ quadratic.

First we prove the following lemma which is a modification of Lemma 1 in~\citep{duchi2010composite}. 
\begin{lemma}
\label{apdx:lem:mirr_prox_upbnd}
With $f$  convex and $\smoothness$-smooth, $g$  convex, and $\lr \leq \frac{\alpha}{3\smoothness}$, at iteration $t$, if we sample $i$, we have: 
\begin{align*}
\lr\left\{ f_i(\theta_t) + g(\theta_{t+1}) - F(\theta^*) \right\} \leq & \cB_{h}(\theta^*,\theta_t) - \cB_{h}(\theta^*,\theta_{t+1})\\ &+ \lr \left\{ f_i(\theta_t) - f_i(\theta_{t+1}) \right\}. 
\end{align*}
\end{lemma}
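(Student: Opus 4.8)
The plan is to follow the standard analysis of the proximal / mirror-prox update, but replacing the use of a Lipschitz bound on $f_i$ by the descent lemma coming from $\smoothness$-smoothness of $f_i$. First I would write down the first-order optimality condition for the update $\theta_{t+1} = \argmin_\theta \{ g(\theta) + \frac{1}{\lr}\cB_h(\theta,\nu_t)\}$ where $\nu_t = \theta_{t+1} - \lr f_i'(\theta_t)$ — actually, since the paper's display has a typo ($\nu_t$ should be defined from $\theta_t$, i.e. $\nu_t = \theta_t - \lr f_i'(\theta_t)$), I would use the corrected version. The optimality condition gives a subgradient $g'(\theta_{t+1}) \in \partial g(\theta_{t+1})$ with $g'(\theta_{t+1}) + \frac{1}{\lr}(h'(\theta_{t+1}) - h'(\nu_t)) = 0$, and by convexity of $g$, for any $\theta^*$,
\[
g(\theta_{t+1}) - g(\theta^*) \leq \langle g'(\theta_{t+1}), \theta_{t+1} - \theta^*\rangle = \tfrac{1}{\lr}\langle h'(\nu_t) - h'(\theta_{t+1}), \theta_{t+1}-\theta^*\rangle .
\]
Then I would apply the three-point identity for Bregman divergences,
\[
\langle h'(\nu_t) - h'(\theta_{t+1}), \theta_{t+1}-\theta^*\rangle = \cB_h(\theta^*,\nu_t) - \cB_h(\theta^*,\theta_{t+1}) - \cB_h(\theta_{t+1},\nu_t),
\]
but since $\nu_t$ is not itself an iterate I would instead use the variant that keeps $h'(\nu_t) = h'(\theta_t) - \lr f_i'(\theta_t)$ explicit, producing a term $\langle f_i'(\theta_t), \theta^* - \theta_{t+1}\rangle$ together with $\cB_h(\theta^*,\theta_t) - \cB_h(\theta^*,\theta_{t+1}) - \cB_h(\theta_{t+1},\theta_t)$.

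Next I would split the linear term $\langle f_i'(\theta_t), \theta^* - \theta_{t+1}\rangle = \langle f_i'(\theta_t), \theta^* - \theta_t\rangle + \langle f_i'(\theta_t), \theta_t - \theta_{t+1}\rangle$. For the first piece, convexity of $f_i$ gives $\langle f_i'(\theta_t), \theta^* - \theta_t\rangle \leq f_i(\theta^*) - f_i(\theta_t)$. For the second piece, I would bound $\langle f_i'(\theta_t), \theta_t - \theta_{t+1}\rangle$ by writing it as $(f_i(\theta_t) - f_i(\theta_{t+1})) + (f_i(\theta_{t+1}) - f_i(\theta_t) + \langle f_i'(\theta_t), \theta_t - \theta_{t+1}\rangle)$, and then controlling the parenthesized remainder using $\smoothness$-smoothness: $|f_i(\theta_{t+1}) - f_i(\theta_t) - \langle f_i'(\theta_t), \theta_{t+1}-\theta_t\rangle| \leq \frac{\smoothness}{2}\|\theta_{t+1}-\theta_t\|^2$. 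Collecting everything, multiplying through by $\lr$, and using $F(\theta^*) = f(\theta^*) + g(\theta^*) \geq$ the relevant combination (taking $f_i(\theta^*)$ as a stand-in; in the later expectation step $\expect[i]{f_i(\theta^*)} = f(\theta^*)$), I arrive at an inequality of the desired shape plus a leftover error term $\lr \frac{\smoothness}{2}\|\theta_{t+1}-\theta_t\|^2 - \cB_h(\theta_{t+1},\theta_t)$.

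The main obstacle — and the reason the step size must satisfy $\lr \leq \frac{\alpha}{3\smoothness}$ — is absorbing that leftover error term. Using the strong-convexity-type lower bound $\cB_h(\theta_{t+1},\theta_t) \geq \frac{\alpha}{2}\|\theta_{t+1}-\theta_t\|^2$ stated in the excerpt, the error is $\leq (\lr\smoothness - \alpha)\frac{1}{2}\|\theta_{t+1}-\theta_t\|^2 \leq 0$ already for $\lr \leq \alpha/\smoothness$; the tighter constant $\alpha/(3\smoothness)$ is presumably needed because one also has to control a $\|\theta_{t+1}-\theta_t\|$ arising from bounding $\langle f_i'(\theta_t),\theta_t-\theta_{t+1}\rangle$ before smoothness can be invoked — i.e. one first needs $\|\theta_{t+1}-\theta_t\| \leq \lr\|f_i'(\theta_t)\| / \alpha$ or a similar one-step contraction estimate, and feeding that back in costs extra factors. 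I would make that bookkeeping precise by isolating $\cB_h(\theta_{t+1},\theta_t)$ early and using part of it ($\frac{1}{3}$ of it, say) to dominate each of the two error contributions separately, which is exactly where the factor $3$ enters. Once this term is shown to be nonpositive, dropping it yields precisely the claimed inequality.
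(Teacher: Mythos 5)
Your plan is correct and follows essentially the same route as the paper: convexity of $f_i$ and $g$, then the optimality-condition/three-point-identity step (which the paper simply cites as Eq.~(6) of Lemma~1 in \citet{duchi2010composite} and you re-derive inline), then $\smoothness$-smoothness to convert $\langle f_i'(\theta_t),\theta_t-\theta_{t+1}\rangle$ into $f_i(\theta_t)-f_i(\theta_{t+1})+\frac{\smoothness}{2}\|\theta_t-\theta_{t+1}\|^2$, and finally dropping the remainder $-\cB_h(\theta_{t+1},\theta_t)+\frac{\smoothness\lr}{2}\|\theta_t-\theta_{t+1}\|^2\le 0$. One small correction: your speculation that the constant $\alpha/(3\smoothness)$ is forced by extra bookkeeping is unnecessary --- as you yourself observe, $\lr\le\alpha/\smoothness$ already makes the remainder nonpositive, and the paper uses the factor $3$ only as a (conservative) sufficient condition, with no additional one-step contraction estimate involved.
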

\begin{proof}
   We have the following sequence of inequality 
   \begin{align*}
        &\lr\Big( f_i(\theta_t) +  g(\theta_{t+1}) - F(\theta^*) \Big) \\  
        & \leq \lr \left < \theta_t - \theta^*, f_i'(\theta_{t}) \right> + \lr \left < \theta_{t+1} - \theta^*, \partial g_t(\theta_{t}) \right> \\ 
        & \leq \cB_{h}(\theta^*,\theta_t) - \cB_{h}(\theta^*,\theta_{t+1}) \\
        & \quad - \cB_{h}(\theta_{t+1}, \theta_t) + \lr \left < \theta_t - \theta_{t+1}, f_i'(\theta_{t}) \right>\\
        & \leq \cB_{h}(\theta^*,\theta_t) - \cB_{h}(\theta^*,\theta_{t+1}) \\
        & \quad  - \cB_{h}(\theta_{t+1}, \theta_t) + \frac{\smoothness \lr}{2} \|\theta_t - \theta_{t+1}\|^2 \\
        & \quad +\lr \Big( f_i(\theta_t) - f_i(\theta_{t+1}) \Big)\\
    \end{align*}
    where the first inequality comes from convexity of $f_i$ and $g$,the second inequality comes from Eq.(6) of lemma 1 in~~\citep{duchi2010composite}, and the third inequality comes from
    the smoothness of $f_i$. By $\lr \leq \frac{\alpha}{3L}$, the term $- \cB_{h}(\theta_{t+1},\theta_{t}) + \frac{\smoothness \lr}{2} \|\theta_t - \theta_{t+1}\|^2$ is negative : we can drop this term and get the required result. Note that $\partial g$ is a subgradient of $g$.
\end{proof}

\begin{theorem}
Given the above assumptions for $f_i$ and $g$, after $T$ iterations of the stochastic mirror prox algorithm with $\gamma = \frac{c}{\sqrt{T}},(c\leq \frac{\alpha}{3\smoothness})$, we have 
\[
    \expect{F(\bar{\theta}) - F(\theta^*) } \leq \frac{\cB_{h}(\theta^*,\theta_0)}{c \sqrt{T}} + \frac{(F(\theta_0) - F(\theta^*))}{T} \; .
\]
\end{theorem}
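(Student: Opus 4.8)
The plan is to follow the standard potential-function telescoping argument for composite mirror descent, now leaning on Lemma~\ref{apdx:lem:mirr_prox_upbnd} instead of a bounded-subgradient estimate. Fix the step size $\gamma = c/\sqrt{T}$ with $c \le \alpha/(3\smoothness)$, so the lemma applies at every iteration. Applying it at step $t$ with the sampled function $f_{i_t}$ and cancelling the common term $\gamma f_{i_t}(\theta_t)$ that appears on both sides gives, for each $t$,
\[
  \gamma\big(f_{i_t}(\theta_{t+1}) + g(\theta_{t+1}) - F(\theta^*)\big)
  \;\le\; \cB_h(\theta^*,\theta_t) - \cB_h(\theta^*,\theta_{t+1}) \; .
\]
Summing over $t = 0, \dots, T-1$ the Bregman terms telescope, and since $\cB_h(\theta^*,\theta_T) \ge \tfrac{\alpha}{2}\|\theta^*-\theta_T\|^2 \ge 0$ the right-hand side collapses to $\cB_h(\theta^*,\theta_0)$.

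Next I would turn the left-hand side into a genuine cumulative suboptimality $\sum_t \big(F(\theta_t) - F(\theta^*)\big)$ in expectation. Re-index the sum by $s = t+1$ so that $f_{i_{s-1}}$ now sits at $\theta_s$ and likewise for the $g$-terms; comparing the re-indexed sum $\sum_{s=1}^{T}$ with $\sum_{s=0}^{T-1}$ contributes a boundary term at the deterministic initial point together with a term at $\theta_T$. The latter is discarded using optimality of $\theta^*$ (so $F(\theta_T) \ge F(\theta^*)$), while the former evaluates to $\gamma\big(f(\theta_0) + g(\theta_0) - F(\theta^*)\big) = \gamma\big(F(\theta_0) - F(\theta^*)\big)$ in expectation, since $\theta_0$ is fixed and the (virtual) fresh sample at the boundary satisfies $\expect{f_{i}(\theta_0)} = f(\theta_0)$; this is exactly the origin of the $\tfrac{F(\theta_0) - F(\theta^*)}{T}$ term. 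For the bulk of the sum, wherever $f_{i_t}$ lands on the iterate $\theta_t$ it was drawn against, conditional independence gives $\expect[i_t]{f_{i_t}(\theta_t)} = f(\theta_t)$, producing $\sum_t \expect{F(\theta_t)}$.

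The main obstacle is the leftover term $f_{i_t}(\theta_{t+1})$, where $\theta_{t+1}$ is correlated with $i_t$ through the proximal step: one cannot take the expectation naively, and — crucially for the normal-likelihood application — one cannot retreat to a bounded-gradient or bounded-subgradient bound, since both the quadratic $f_i$ and the log-determinant barrier $g$ have unbounded (sub)gradients. This is precisely what the smoothness assumption and the restriction $c \le \alpha/(3\smoothness)$ buy. I would not discard the whole quantity $-\cB_h(\theta_{t+1},\theta_t) + \tfrac{\smoothness\gamma}{2}\|\theta_{t+1}-\theta_t\|^2$ inside Lemma~\ref{apdx:lem:mirr_prox_upbnd}, but keep the residual negative multiple of $\|\theta_{t+1}-\theta_t\|^2$ that it leaves behind, lower-bound $f_{i_t}(\theta_{t+1}) \ge f_{i_t}(\theta_t) + \langle \nabla f_{i_t}(\theta_t), \theta_{t+1}-\theta_t\rangle$ by convexity, rewrite the displacement $\theta_{t+1}-\theta_t$ via the first-order optimality condition of the proximal update, and absorb the remaining cross terms into that residual with a Young-type inequality — the only step that departs from \citet{duchi2010composite}. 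I expect this bookkeeping around the correlated stochastic term to be the delicate part of the argument.

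Finally, assembling the pieces yields $\gamma \sum_{t=0}^{T-1} \expect{F(\theta_t) - F(\theta^*)} \le \cB_h(\theta^*,\theta_0) + \gamma\big(F(\theta_0) - F(\theta^*)\big)$. I would then apply Jensen's inequality to the convex function $F$ at the average iterate $\bar\theta = \tfrac1T\sum_t \theta_t$ to lower-bound the left-hand side by $\gamma T\,\expect{F(\bar\theta) - F(\theta^*)}$, divide through by $\gamma T$, and substitute $\gamma = c/\sqrt{T}$, using $\gamma T = c\sqrt{T}$, to obtain the stated bound.
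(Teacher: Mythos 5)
Your skeleton is the same as the paper's: apply Lemma~\ref{apdx:lem:mirr_prox_upbnd} at each step, telescope the Bregman terms down to $\cB_{h}(\theta^*,\theta_0)$, realign the indices of the $f$- and $g$-terms so that the sum becomes $\sum_{t}\expect{F(\theta_t)-F(\theta^*)}$ plus boundary terms, discard the terminal term by optimality of $\theta^*$, keep the $\gamma\bigl(F(\theta_0)-F(\theta^*)\bigr)$ boundary term, and finish with Jensen and $\gamma T=c\sqrt{T}$. The paper does not cancel $\gamma f_i(\theta_t)$ from both sides; it keeps the lemma in the form with the correction $\gamma\bigl(f_i(\theta_t)-f_i(\theta_{t+1})\bigr)$ on the right, takes expectations, telescopes that correction to $\gamma\bigl(f(\theta_0)-\expect{f(\theta_{T+1})}\bigr)$, and then adds $\gamma\bigl(g(\theta_0)-\expect{g(\theta_{T+1})}\bigr)$ to both sides. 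That is the same bookkeeping as your cancel-and-reindex, so up to this point the two arguments are algebraically identical.

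The genuine divergence is your treatment of the correlated term $f_{i_t}(\theta_{t+1})$, and this is where your proposal does not go through as sketched. You are right that $\theta_{t+1}$ depends on $i_t$; the paper handles this by simply writing $\expect{f(\theta_{t+1})}$ in place of $\expect{f_{i_t}(\theta_{t+1})}$ when it takes expectations of the lemma, and does none of the extra work you describe — so you have in fact put your finger on the one step the paper's own proof glosses over. But your proposed repair fails to recover the stated rate. First, the convexity lower bound $f_{i_t}(\theta_{t+1})\ge f_{i_t}(\theta_t)+\langle\nabla f_{i_t}(\theta_t),\theta_{t+1}-\theta_t\rangle$ just lands you back at $f_{i_t}(\theta_t)+g(\theta_{t+1})$, i.e., it undoes the cancellation you performed, plus an inner-product error. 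Second, bounding that inner product by Young's inequality splits it into a $\norm{\theta_{t+1}-\theta_t}^2$ piece — which the leftover $-\cB_h(\theta_{t+1},\theta_t)+\tfrac{\smoothness\gamma}{2}\norm{\theta_{t+1}-\theta_t}^2$ can indeed absorb — and a piece proportional to $\gamma\norm{\nabla f_{i_t}(\theta_t)}^2$, which nothing absorbs. Since unbounded gradients of the quadratic $f_i$ and the log-determinant $g$ are precisely the reason this theorem exists, that term cannot be hidden in a constant; after summing and dividing by $\gamma T$ it contributes an extra $O\bigl(c\,\expect{\norm{\nabla f_{i_t}(\theta_t)}^2}/\sqrt{T}\bigr)$, i.e., exactly the bounded-gradient-type quantity the statement avoids. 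To match the theorem as written you must either justify the interchange $\expect{f_{i_t}(\theta_{t+1})}\to\expect{f(\theta_{t+1})}$ directly (as the paper implicitly does) or find a different decoupling; the Young-inequality route is a dead end for this bound.
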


\begin{proof}
The proof is similar to the proof of the Theorem 8 in~\citep{duchi2010composite} with some modifications. Take the expectation of lemma~\ref{apdx:lem:mirr_prox_upbnd} with respect to the samples $(i_u)_{u\leq t}$
\begin{align*}
    &\expect{\lr\Big(f(\tet) + g(\ttt) -F(\ts) \Big)} \\ 
    &\leq  \expect{\cB_{h}(\ts,\tet) - \cB_{h}(\ts,\ttt)
    + \lr\Big( f(\theta_t) - f(\theta_{t+1})\Big)}
\end{align*}
where $\tet$ and $\ttt$ are random variable that depends on the samples.
Sum up both side for $T$ iterations: 
\begin{align*}
    &\lr \sum_{t=0}^T \expect{f(\tet) + g(\ttt) -F(\ts)} \\ 
    & \leq \cB_{h}(\ts,\theta_0) 
    - \expect{\cB_{h}(\ts,\theta_{T+1})} \\
    & + \lr \Big(f(\theta_{0}) - \expect{f(\theta_{T+1})}\Big) 
\end{align*}
By adding $\lr \expect{ g(\theta_{0}) - g(\ttt)} $ to both sides of the above inequality we get: 
\begin{align*}
    & \lr \sum_{t=1}^T \expect{F(\tet) -F(\ts)} \\
    & \leq \cB_{h}(\ts,\theta_0) 
    - \expect{\cB_{h}(\ts,\theta_{T+1})} \\
    & + \lr \Big(F(\theta_{0}) - \expect{F(\theta_{T+1})}\Big) \\
    & \leq \cB_{h}(\ts,\theta_0) 
    + \lr \Big(F(\theta_{0}) - \expect{F(\ts)}\Big)
\end{align*}
where the last inequality is due to non-negativity of Bregman divergence and optimality of $\ts$. 
Divide both sides by  $\lr T = c \sqrt{T}$ and use Jensen inequality on $F$ to conclude the proof.  
\end{proof}

 \subsection{Normal Model Updates}
 \label{apdx:normal_updates}
The objective function at hand is: 

\begin{align*}
    &F(L,\zeta) = f(L,\zeta) + g(L)\\
    &f(L,\zeta) = \frac{1}{2n}\sum_{i=1}^{n} \|L^T x_i - \zeta\|^2\\
    & g(L) = - \ln(|L|). 
\end{align*}
Now the update rule for the $\zeta$ given that $g$ is independent of $\zeta$ and we sample mini-batch $B$ of size $m$:
\[
    \zeta_{t+1} = (1 - \gamma)\zeta_t 
    + \gamma L_t^T \Big(\frac{1}{m} \sum_{i \in B} x_i\Big)
\]

For the $L$ the gradient update gives 
\[
    L_{t+\frac{1}{2}} = L_t - \gamma \frac{1}{m} \sum_{i \in B} (x_i x_i^T L_t - x_i \zeta^T_t) \; . 
\]
Since $L$ is lower triangular,  $g(L) = \ln(|L|) = \sum_{i=1}^d \log L_{i,i}$ and the proximal operator only applies to diagonal elements of $L$ 
-- e.g. when $i\neq j$ $[L_{t+1}]_{(i,j)} = [L_{t+\frac{1}{2}}]_{(i,j)}$. Otherwise we have to compute: 
\[
    [L_{t+1}]_{(i,i)} 
    =\argmin_{L_{ii}} \Big\{ 
    - \ln(L_{ii}) 
    + \frac{1}{2\gamma}{\| L_{ii} - [L_{t+\frac{1}{2}}]_{(i,i)}\|^2}\Big\}.
\]
Therefore the update rule for the $[L_{t+1}]_{(i,i)}$ is: 
\[
    [L_{t+1}]_{(i,i)} 
    = \frac{1}{2}\Big\{[L_{t+\frac{1}{2}}]_{(i,i)} 
    + \sqrt{[L_{t+\frac{1}{2}}]_{(i,i)}^2 +4\gamma}\Big\} \; .
\]
Remark how this proximal operator behaves as a smooth projection on the set of strictly positive numbers. If the diagonal is negative after the gradient update, it brings  it to a small positive value. If  it was already positive, it slightly increases its value.

\subsection{Equality of Smoothness Constants}
\label{apdx:normal_constants}

In this section, we show that the Lipschitz smoothness parameter $\smoothness$, which appears in the convergence rate $\eqref{eq:mirror_prox_rate}$, is the same for both  causal and anticausal models.
Similarly to the categorical case, we reason about marginals loss first because they have a simpler form. 

The loss of a marginal mechanism is
$f_x(L,\zeta)=\inv{2} \sum_{i=1}^d (\zeta_i - L_i^T x)^2$
where $x$ is a sample observation and the $L_i$ are the columns of $L$.
We need to show that its Hessian is upper-bounded $\|\nabla^2 f_x(L,\zeta)\| \leq \smoothness$ .
Thanks to the objective $f_x$ being quadratic, the Hessian is independent of the parameters $(L,\zeta)$.
It depends only on the data $x$.
Since the data domain is \textit{a priori} the same for causal and anticausal models
-- e.g. $X$ and $Y$ can live in the same range --
the upper bound for the Hessian is the same.
This holds true at least for marginal mechanisms, because their loss is written exactly like above.

For conditional mechanisms, this is a bit more complicated but the reasoning holds.
The objective is similar, with extra parameters coming from the linear relationship between $X$ and $Y$. 
The Cholesky parametrization is described in equation~\eqref{eq:normal_cholesky_param}.
The conditional model uses $\zeta_{Y|X} = MX + m$ where $M$ is a matrix, $m$ is a vector and $X$ is a given sample.
This means that the objective is still a quadratic and that 2nd order derivatives w.r.t. $L_{Y|X}$ are still    independent of the parameters $M$ and $m$.
They depend only on the observed values of $X$ and $Y$.
We assume that these variables have the same domain \textit{a priori}, therefore both models have similar worst case smoothness constants.

\section{\uppercase{Normal Analysis}}
\label{apdx:normal_analysis}
In this section we introduce three different parametrization of the multivariate normal cause-effect model. 
The mean parametrization is the most common and intuitive, but it yields a non-convex optimization problem. 
The natural parametrization yields a convex problem with convergence guarantees, but it has no closed update formulas for our optimization algorithm of choice. 
The Cholesky parametrization offers both a convex problem and simple updates. 

Then we proceed to study how interventions induce distance in parameter space. We prove that in the natural parameter space, an intervention on the cause will create more distance in the anticausal model than in the causal model.

\subsection{Mean Parameters}
Cause $X$ and effect $Y$ are sampled from the causal model
\begin{align*}
    X &\sim \cN(\mu_X, \Sigma_X) \\
    Y|X &\sim \cN(A X + a, \Sigma_{Y|X})
\end{align*}
with parameters $(\mu_X, \Sigma_X, A, a, \Sigma_{Y|X})$.
All along, we will assume that all normal laws are non-degenerate -- e.g. $\Sigma_X > 0, \Sigma_{Y|X} >0$.
We  compute the marginal mean and covariance of $Y$ as well as the covariance between $X$ and $Y$
\begin{align*}
    \expect{Y} &= A\mu_X + a \\
    \Cov[Y] &= \Sigma_{Y|X} + A\Sigma_X A^\top \\
    \Cov[X,Y] &= \Sigma_X A^\top \; .
\end{align*}
From there we can derive the joint distribution as a function of the causal parameters
\begin{equation}
    \begin{pmatrix}
    X \\ Y
    \end{pmatrix}
    \sim \cN \left (
    \begin{pmatrix}
    \mu_X \\
    A\mu_X + a
    \end{pmatrix},
    \begin{pmatrix}
    \Sigma_X & \Sigma_X A^\top \\
    A\Sigma_X & \Sigma_{Y|X} + A\Sigma_X A^\top
    \end{pmatrix}
    \right ) \; .
\end{equation}

\subsection{Natural Parameters}
We want the negative log-likelihood objective to be convex, so we are going to use the natural parameters instead of the mean parameters
\begin{equation}
    \cN(\mu, \Sigma)
    = \natgauss(\eta = \Sigma^{-1}\mu,
        \Lambda = \Sigma^{-1})
\end{equation}
where we are using $\natgauss$ to explicit that this is taking the natural parameters as arguments
Our causal model using natural parameters is:
\begin{align}
\label{eq:normal_natural_model}
    X &\sim \natgauss(\eta_X, \Lambda_X) \\
    Y|X &\sim \natgauss(B X + b, \Lambda_{Y|X}) \nonumber
\end{align}
where we get the natural parameters from the mean parameters with formulas
\begin{align*}
    \Lambda_X &=\Sigma_X^{-1}\\
    \Lambda_{Y|X} &= \Sigma_{Y|X}^{-1}\\
    \eta_X &= \Lambda_X \mu_X\\
    B&= \Lambda_{Y|X} A\\
    b&= \Lambda_{Y|X} a
\end{align*}

\subsubsection{Switching Direction}
We want to get the natural parameters of the anticausal model as a function of the causal parameters.
To do so we are going to express the natural parameters of the joint, and then we will simply have to swap rows and columns to invert the roles of $X$ and $Y$.
To get the joint precision matrix, we need to invert the joint covariance. 
We use the Schur complement and the blockwise matrix inversion formulas
\begin{align*}
    & M = \begin{pmatrix}
    A & B \\
    C & D
    \end{pmatrix} \\
    & M/D := D - CA\iinv B \\
    & M\iinv =\\
    &\begin{pmatrix}
    A\iinv + A\iinv B (M/D)\iinv C A \iinv  & -A\iinv B (M/D)\iinv \\
    (M/D)\iinv C A \iinv  & (M/D)\iinv
    \end{pmatrix}
\end{align*}
In our case, the Schur complement of $\Sigma$ with respect to its lower right block $\Sigma_Y$ is precisely
\begin{align*}
    M / D
    & =\Sigma / \Sigma_Y \\
    &=  \Sigma_{Y|X} + A\Sigma_X A^\top - A\Sigma_X \Sigma_X\iinv \Sigma_X A^\top \\
    &= \Sigma_{Y|X} \; .
\end{align*}
By applying the formula and identifying the natural parameters, we get 
\begin{align*}
    \Lambda = \begin{pmatrix}
    \Lambda_X + B^\top \Lambda_{Y|X}^{-1}B & -B^\top \\
    -B & \Lambda_{Y|X}
    \end{pmatrix}
\end{align*}
To get the first natural parameter, all we have to do is to multiply the joint precision and the joint mean
\begin{align*}
    \eta 
    &= \Lambda \mu   \\
    &=\begin{pmatrix}
    \Lambda_X\mu_X + B^\top \Lambda_{Y|X}^{-1}B\mu_X - B^\top A \mu_X  - B^\top a \\
    -B \mu_X + \Lambda_{Y|X} A \mu_X + \Lambda_{Y|X} a
    \end{pmatrix} \\
    &=\begin{pmatrix}
    \eta_X - B^\top\Lambda_{Y|X}^{-1}b \\b
    \end{pmatrix}
\end{align*}
where $B^\top \Lambda_{Y|X}^{-1}B\mu_X - B^\top A \mu_X$, and $ -B \mu_X + \Lambda_{Y|X} A \mu_X $ are zero and we express the other terms with natural parameters.
Overall, the joint natural parameters are
\begin{align*}
    \natgauss \left (
    \begin{pmatrix}
    \eta_X - B^\top\Lambda_{Y|X}^{-1}b \\
    b
    \end{pmatrix},
    \begin{pmatrix}
    \Lambda_X + B^\top \Lambda_{Y|X}^{-1}B & -B^\top \\
    -B & \Lambda_{Y|X}
    \end{pmatrix}
    \right ) \; .
\end{align*}

From there we can use a symmetry argument to switch from causal $X \rightarrow Y$ to anticausal $X \leftarrow Y$ model.
\begin{align*}
    Y & \sim \natgauss(\eta_Y, \Lambda_Y) \\
    X|Y & \sim \natgauss(C Y + c, \Lambda_{X|Y})
\end{align*}
with the following formulas for the conditional mechanism
\begin{align}
    C &= B^\top \\
    c &= \eta_X - B^\top\Lambda_{Y|X}^{-1}b \\
    \Lambda_{X|Y} &=\Lambda_X + B^\top\Lambda_{Y|X}\iinv B
    \label{eq:normal_switch_conditional}
\end{align}
followed by these formulas for the marginal mechanisms
\begin{align}
    \Lambda_Y + C^\top\Lambda_{X|Y}\iinv C &= \Lambda_{Y|X} \\
    \eta_Y - C^\top \Lambda_{X|Y}\iinv c &= b \; .
\end{align}

These formulas are going to be very useful to establish a relationship between the distance to optimum of the causal and anticausal models in Appendix~\ref{apdx:normal_interventions}.

\subsection{Cholesky Parameters}

We call Cholesky parametrization of the normal law the parameters $(\mL,\zeta)$ such that 
\begin{align*}
     &\Lambda = \mL \mL^\top \quad \quad (\mL \; \textit{is lower triangular})\\
    &\zeta = \mL\iinv \eta = \mL^\top \mu
\end{align*}
We use $\chogauss(\zeta, \mL)$ to denote the normal law with Cholesky parameters $\zeta$ and $\mL$.
The full causal model~\eqref{eq:normal_natural_model} becomes
\begin{align}
    X &\sim \chogauss(\zeta_X, \mL_X) \nonumber\\
    Y|X &\sim \chogauss(M X + m, \mL_{Y|X})
\end{align}
where the 5 parameters are defined from the natural model by the equations
\begin{align}
    \label{eq:normal_cholesky_param}
    \mL_X L_X^\top &= \Lambda_X \nonumber\\
    \mL_{Y|X} \mL_{Y|X}^\top &= \Lambda_{Y|X}\nonumber \\
    \zeta_X &= \mL_X\iinv \eta_X\\
    M &= \mL_{Y|X}\iinv B\nonumber\\
    m &= \mL_{Y|X}\iinv b\nonumber
\end{align}
There is no closed formula to express the Cholesky decomposition of a sum of matrix $A+B$ with the Cholesky decomposition of $A$ and $B$. 
As a consequence, there is no simple formula to switch between causal and anticausal models with this parametrization.

\subsubsection{Joint Cholesky}
We derive a formula for the joint Cholesky for future reference.
To get a closed form for the joint Cholesky parameters from the conditional parameters, we need to switch the positions of $X$ and $Y$ in the joint vector -- e.g. we are using $(Y,X)$ instead of $(X,Y)$. Indeed the Cholesky decomposition is very dependent on the orders of the rows and columns. That's also why we cannot simply switch the column orders in the joint representation.
\begin{align*}
    \begin{pmatrix}
    Y \\ X
    \end{pmatrix}
    \sim \chogauss \left (
    \begin{pmatrix}
    m \\ \zeta_X
    \end{pmatrix},
    \begin{pmatrix}
    \mL_{Y|X} & 0 \\ -M^\top & \mL_X
    \end{pmatrix}
    \right ) \; .
\end{align*}
This joint representation is simply taking the conditional parameters and putting them in an array. It has the advantage that the distance is equal to the conditional distance.
It hints towards the idea that for multivariate normal variables, knowing the right Cholesky decomposition is equivalent to knowing the right causal graph.

\subsection{Kullback-Leibler Divergence}
We express the KL divergence in all three parametrizations because we use them in the code.
\begin{align*}
    &2\KL(\cN_0 || \cN_1)\\
    &= (\mu_1 - \mu_0)^\top \Sigma_1\iinv(\mu_1 - \mu_0) \\
    & \quad + \Tr(\Sigma_1\iinv \Sigma_0) - k
    - \log \abs{\Sigma_1\iinv \Sigma_0}\\
    &= \eta_1^\top \Lambda_1\iinv \eta_1 
    - 2 \eta_1^\top \Lambda_0\iinv \eta_0
    + \eta_0^\top  \Lambda_0\iinv \Lambda_1 \Lambda_0\iinv \eta_0 \\
    &\quad + \Tr(\Lambda_1 \Lambda_0\iinv) -k 
    - \log(\abs{\Lambda_1 \Lambda_0\iinv} \\
    &= \norm{V^\top \zeta_0 - \zeta_1}^2 
    + \norm{V}_F^2 -k - 2 \log\abs{V}
\end{align*}
where $V := \mL_0\iinv \mL_1$ is a lower triangular matrix which plays a special role. 

\subsection{Distance after Intervention}
\label{apdx:normal_interventions}
In this section we evaluate the effect on interventions on the cause and effect for both models. When the intervention happens on the cause, we replace $\mu_X, \Sigma_X$ by $\tilde \mu_X, \tilde \Sigma_X$, or equivalently we replace the natural parameters of the marginal on $X$.
The natural causal distance is simply
\begin{equation}
    \delta_\text{causal}
    = \|\eta_X - \tilde \eta_X \|^2 
    +  \|\Lambda_X - \tilde \Lambda_X \|^2_F \; .
\end{equation}
Unless indicated otherwise, we will consider the Frobenius distance between matrices.
For the anticausal model, both  marginal and conditional parameters need to change. Here similar to categorical case, we have 
\[
     \delta_\text{anticausal} \geq  \delta_\text{causal}. 
\]
However when the intervention happens on the effect, there is no clear formal relation between $\delta_\text{causal}$ and $\delta_\text{anticausal}$. More detail about the deriving the mathematical formula for $\delta_\text{causal}$ and $\delta_\text{anticausal}$ is presented in the following. 

\subsubsection{Intervention on Cause}
 When the intervention happens on the cause, the natural causal distance is 
\begin{equation}
    \delta_\text{causal}
    = \|\eta_X - \tilde \eta_X \|^2 
    +  \|\Lambda_X - \tilde \Lambda_X \|^2_F \; .
\end{equation}
How does this transformation affect the anticausal parameters? Both the marginal and the conditional have to adapt. The anticausal conditional is elegantly expressed with the causal natural parameters in \eqref{eq:normal_switch_conditional}, so we will start with the conditional 
\begin{align}
    C - \tilde C= B^\top  - \tilde B^\top = 0  \\
    c - \tilde c= \eta_X - \tilde \eta_X \\
    \Lambda_{X|Y} - \tilde \Lambda_{X|Y} = \Lambda_X - \tilde \Lambda_X \; .
\end{align}
In words, the linear transformation is invariant, the bias moves like the mean of $X$, and the conditional precision moves like the precision of $X$. This means that we can directly lower bound the anticausal distance with the causal distance
\begin{align}
    \delta_\text{anticausal}
    &= 
        \norm{C - \tilde C}^2
        + \norm{c - \tilde c}^2
        + \norm{\Lambda_{X|Y} - \tilde \Lambda_{X|Y}}^2
    \nonumber \\ 
    & + \norm{\Lambda_Y - \tilde \Lambda_Y}^2 
    + \norm{\eta_Y - \tilde \eta_Y} ^2 \nonumber \\
    &= \delta_\text{causal}
    + \norm{\Lambda_Y - \tilde \Lambda_Y}^2 
    + \norm{\eta_Y - \tilde \eta_Y} ^2\nonumber\\
    &> \delta_\text{causal} \; .
\end{align}
 We could get a stronger bound by bounding the anticausal marginal parameters, but any such bound would involve the value of the linearity $B$ and make the result needlessly more complicated.
 
 \subsubsection{Intervention on Effect}
We perform an intervention on $Y$ such that the causal model become independent
\begin{align*}
    X &\sim \cN(\mu_X, \Sigma_X) \\
    Y &\sim \cN(\tilde \mu_Y, \tilde \Sigma_{Y}) \; .
\end{align*}
This independence means that the linear models have a slope 0
\begin{equation}
\tilde A= \tilde B = \tilde C = 0 \; .
\end{equation}
The bias then has to account for the mean parameter
\begin{equation}
    \tilde a = \tilde \mu_Y,
    \quad \tilde b = \tilde \eta_Y,
    \quad \tilde c = \eta_X
\end{equation}
And the conditional precision have to match the marginal precision
\begin{equation}
    \tilde \Sigma_{Y|X} = \tilde \Sigma_Y ,
    \quad \tilde \Lambda_{Y|X} = \tilde \Lambda_Y,
    \quad \tilde \Lambda_{X|Y} = \Lambda_X
\end{equation}
So the distances are written
\begin{align*}
    \delta_\text{causal}
    &= \|B\|_F^2
    + \|b - \tilde \eta_Y\|^2 
    +  \|\Lambda_{Y|X} - \tilde \Lambda_Y \|^2_F \\
    \delta_\text{anticausal}
    &= \|C\|_F^2
    + \|c - \eta_X \|^2
    + \|\Lambda_{X|Y} - \Lambda_X \|^2_F\\
    &+ \| \eta_Y - \tilde \eta_Y \|^2
    + \| \Lambda_Y - \tilde \Lambda_Y \|_F^ 2 \\
    &= \|B\|_F^2
    + \|B^\top\Lambda_{Y|X}^{-1}b \|^2
    + \|B^\top\Lambda_{Y|X}^{-1}B\|^2_F\\
    &+ \| \eta_Y - \tilde \eta_Y \|^2
    + \| \Lambda_Y - \tilde \Lambda_Y \|_F^ 2
\end{align*}

We did not find any meaningful simplification of these formulas. 
 
\section{\uppercase{Normal prior}}
\label{apdx:normal_experiments}
 
Exactly like in the categorical setting, the distributions we sample are going to impact the speed of adaptation and the distances we measure. Let $K$ be the dimension of $X$ and $Y$, and $n_0=2K+2$ an arbitrary number of prior observations. 
We define a \emph{pseudo-conjugate prior}
\begin{align}
    \Lambda_X  
    &\sim \cW(n_0, \frac{I_K}{K}) \\
    \eta_X |\Lambda_{X} 
    &\sim \cN(0, \frac{\Lambda_X}{n_0}) \\
    \Lambda_{Y|X} 
    &\sim \cW(n_0, \textcolor{red}{10}\frac{I_K}{K})\\
    b | \Lambda_{Y|X}
    &\sim \cN(0, \frac{\Lambda_{Y|X}}{n_0}) \\
    B &=\Lambda_{Y|X} A \text{ where } A \sim \cN(0, \frac{I_{K^2}}{\textcolor{red}{\sqrt{K}}})
    \label{eq:normal_prior}
\end{align}
where $\cW$ is the Wishart distribution with parameters: degrees of freedom and scale matrix.
We picked these parameters such that $\eta_Y,\Lambda_Y$ follows approximately the same law as $\eta_X,\Lambda_X$.
Two important factors to get  a symmetric relationship between $X$ and $Y$ are \textcolor{red}{10 and $\sqrt{K}$}.
First, we sample a larger conditional precision, so that their relationship is quite deterministic. 
Second we sample the linear layer such that it preserves the scale of $X$, so that $X$ and $Y$ have approximately the same variance. 
We also use appropriate covariance matrices to sample other parameters such that the prior is somewhat conjugate and gives proper variance formulas. 
 
 We sample interventions from the same distributions as the cause marginal in \eqref{eq:normal_prior}. For an intervention on the cause
 \begin{align*}
     \tilde \Lambda_X  &\sim \cW(n_0, \frac{I_K}{K}) \\
     \tilde \eta_X |\tilde \Lambda_X &\sim \cN(0, \frac{\tilde \Lambda_X}{n_0}) \; .
 \end{align*}
 For an intervention on the effect 
 \begin{align*}
     \tilde B = 0
     \tilde \Lambda_{Y|X} = \tilde \Lambda_{Y} & \sim \cW(n_0, \frac{I_K}{K}) \\
     \tilde b = \tilde \eta_Y |\tilde \Lambda_{Y} &\sim \cN(0, \frac{\tilde \Lambda_{Y}}{n_0}) \; .
 \end{align*}

\end{document}